\def\eqref#1{equation~\ref{#1}}
\def\Eqref#1{Equation~\ref{#1}}
\def\1{\bm{1}}
\def\rvd{{\mathbf{d}}}
\def\rvx{{\mathbf{x}}}
\def\rvy{{\mathbf{y}}}
\def\rvz{{\mathbf{z}}}
\def\mI{{\bm{I}}}
\DeclareMathAlphabet{\mathsfit}{\encodingdefault}{\sfdefault}{m}{sl}
\SetMathAlphabet{\mathsfit}{bold}{\encodingdefault}{\sfdefault}{bx}{n}
\def\gL{{\mathcal{L}}}
\def\gN{{\mathcal{N}}}
\def\gO{{\mathcal{O}}}
\def\gU{{\mathcal{U}}}
\newcommand{\E}{\mathbb{E}}
\newcommand{\R}{\mathbb{R}}
\newcommand{\Var}{\mathrm{Var}}
\newcommand{\cmark}{\ding{51}}%
\newcommand{\xmark}{\ding{55}}%
\newenvironment{proofs}{%
  \proof}{\endproof}
\def\onedot{$\mathsurround0pt\ldotp$}
\def\ie{\emph{i.e}\onedot, }
\newtheorem{theorem}{Theorem}
\newtheorem{lemma}{Lemma}
\def\eqref#1{equation~\ref{#1}}
\def\Eqref#1{Eq.~(\ref{#1})}
\def\1{\bm{1}}
\def\rvd{{\mathbf{d}}}
\def\rvx{{\mathbf{x}}}
\def\rvy{{\mathbf{y}}}
\def\rvz{{\mathbf{z}}}
\def\mI{{\bm{I}}}
\DeclareMathAlphabet{\mathsfit}{\encodingdefault}{\sfdefault}{m}{sl}
\SetMathAlphabet{\mathsfit}{bold}{\encodingdefault}{\sfdefault}{bx}{n}
\def\gL{{\mathcal{L}}}
\def\gN{{\mathcal{N}}}
\def\gO{{\mathcal{O}}}
\def\gU{{\mathcal{U}}}
\newcommand\numberthis{\addtocounter{equation}{1}\tag{\theequation}}
\newsavebox\tmpbox
\definecolor{myred}{RGB}{191, 2, 19}
\def\setstretch#1{\renewcommand{\baselinestretch}{#1}}
\newcommand{\method}{$f$-distill} 
\newcommand{\methodtext}{$f$-distill } 
\definecolor{iccvblue}{rgb}{0.21,0.49,0.74}
\title{One-step Diffusion Models with $f$-Divergence Distribution Matching}
\author{Yilun Xu\\
NVIDIA\\
{\tt\small yilunx@nvidia.com}
\and
Weili Nie\\
NVIDIA\\
{\tt\small wnie@nvidia.com}
\and 
Arash Vahdat \\
NVIDIA\\
{\tt\small avahdat@nvidia.com}
}
\begin{document}
\thispagestyle{plain}
\maketitle

\begin{abstract}
Sampling from diffusion models involves a slow iterative process that hinders their practical deployment, especially for interactive applications. To accelerate sampling, recent approaches distill a multi-step diffusion model into a single-step student generator via variational score distillation, which matches the distribution of samples generated by the student to the teacher's distribution. However, these approaches use the reverse Kullback–Leibler (KL) divergence for distribution matching which is known to be mode-seeking. In this paper, we generalize the distribution matching approach using a novel $f$-divergence minimization framework, termed \method, that covers different divergences with different properties.  
We derive the gradient of the $f$-divergence between the teacher and student distributions and show that it is expressed as the product of their score differences and a weighting function determined by their density ratio. This weighting function naturally emphasizes samples with higher density in the teacher distribution, when using a less mode-seeking divergence. We observe that the popular variational score distillation approach using the reverse-KL divergence is a special case within our framework. Empirically, we demonstrate that alternative $f$-divergences, such as forward-KL and Jensen-Shannon divergences, outperform the current best variational score distillation methods across image generation tasks. In particular, when using Jensen-Shannon divergence, \methodtext achieves current state-of-the-art one-step generation performance on ImageNet64 and zero-shot text-to-image generation on MS-COCO.
\looseness=-1
\end{abstract}    
\section{Introduction}
\label{sec:intro}

Diffusion models~\cite{ho2020ddpm, song2020score} are transforming generative modeling in visual domains, with impressive success in generating images~\cite{rombach2022high,saharia2022photorealistic, balaji2022ediffi}, videos~\cite{ho2022video, singer2023makeavideo}, 3D objects~\cite{luo2021diffusion, zeng2022lion}, motion~\cite{zhang2022motiondiffuse, yuan2023physdiff}, etc. However, one of the key limitations of deploying diffusion models in real-world applications is their slow and computationally expensive sampling process that involves calling the denoising neural network iteratively.
\begin{figure}[t]
    \centering
  \begin{subfigure}[b]{0.48\textwidth}
    \includegraphics[width=\textwidth]{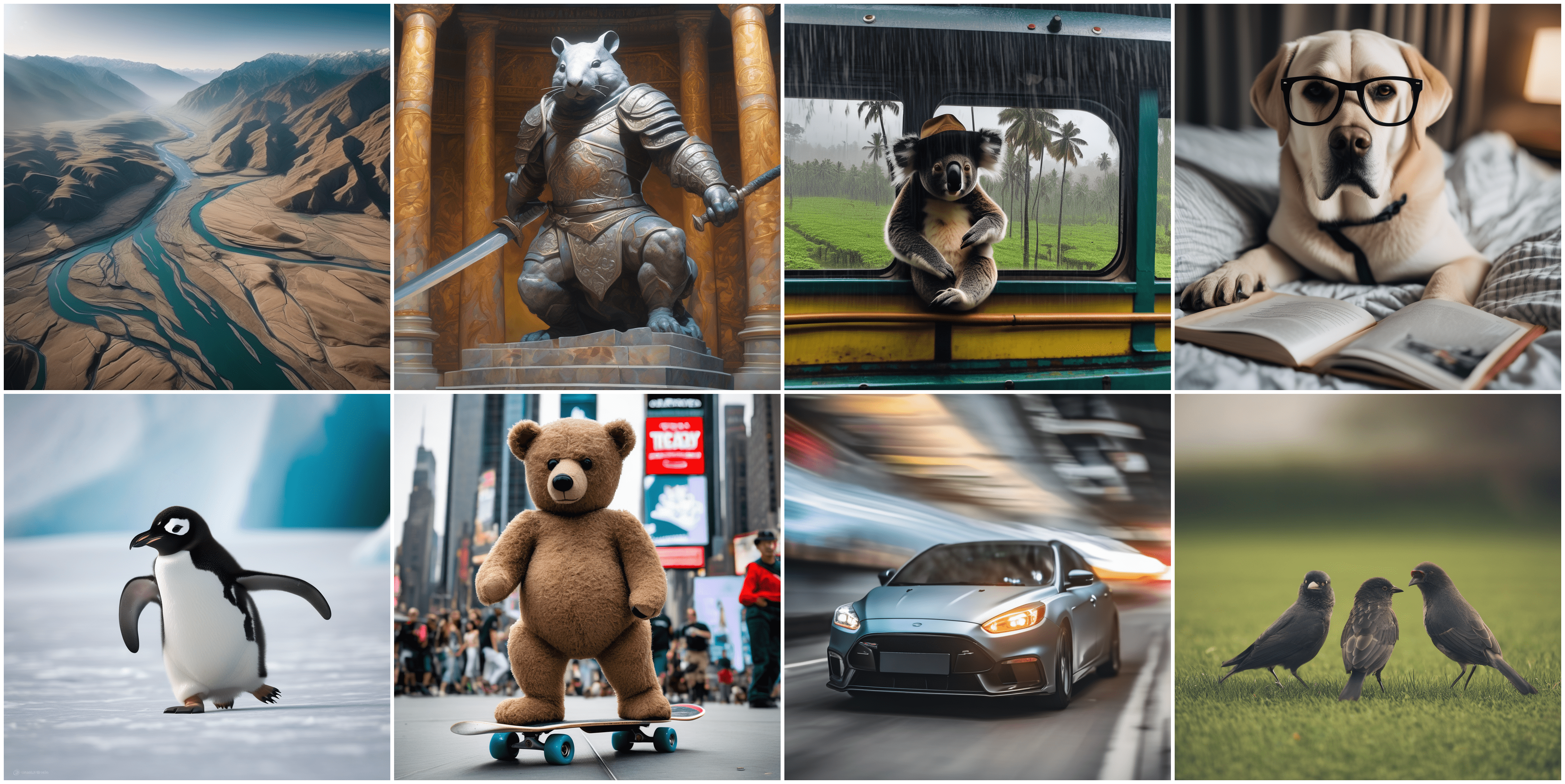}
        \vspace{-12pt}
    \caption{50-step Teacher, SDXL}
  \end{subfigure}
 \hfill
    \begin{subfigure}[b]{0.48\textwidth}
    \includegraphics[width=\textwidth]{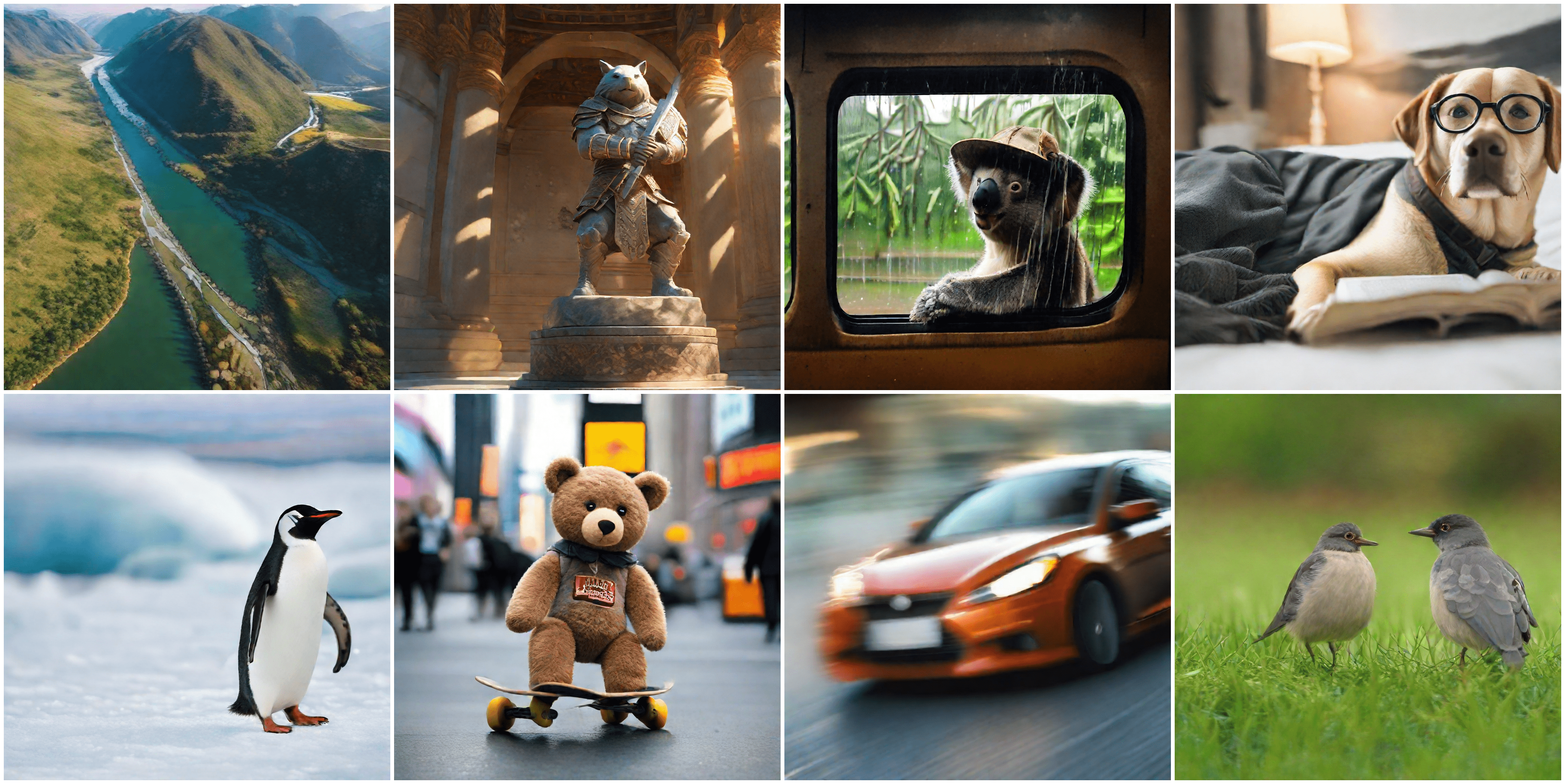}
    \vspace{-12pt}
    \caption{One-step $f$-distill}
  \end{subfigure}
    \caption{Uncurated generated samples by the 50-step teacher~(CFG=8) (a), and one-step student in \methodtext (b), using same set of prompts on SDXL.}
    \label{fig:vis-main}
    \vspace{-12pt}
\end{figure}
Early works on accelerating diffusion models relied on better numerical solvers for solving the ordinary differential equations (ODEs) or stochastic differential equations (SDEs) that describe the sampling process of diffusion models~\cite{Song2020DenoisingDI, JolicoeurMartineau2021GottaGF, lu2022dpm, Karras2022ElucidatingTD, Xu2023RestartSF}. However, these methods can only reduce the number of sampling steps to around tens of steps, due to the discretization error, accumulated with fewer steps.

More recently, distillation-based approaches aim at the ambitious goal of reducing the number of sampling steps to a single network call. These approaches can be generally grouped into two categories: 1) trajectory distillation~\cite{song2023consistency, song2023improved, geng2024consistency, lee2024truncated, lu2024simplifying} which distills the deterministic ODE mapping between noise and data intrinsic in a diffusion model to a one-step student, and 2) distribution matching approaches~\cite{yin2024one, yin2024improved, zheng2024diffusion, zhou2024score} that ignore the deterministic mappings, and instead, matches the distribution of samples generated by a one-step student to the distribution imposed by a pre-trained teacher diffusion model. Among the two categories, the latter often performs better in practice as the deterministic mapping between noise and data is deemed complex and hard to learn. Naturally, the choice of divergence in distribution matching plays a key role as it dictates how the student's distribution is matched against the teacher's. Existing works~\cite{yin2024one, yin2024improved, dao2025swiftbrush, nguyen2024swiftbrush} commonly use variational score distillation~\cite{Wang2023ProlificDreamerHA} that matches the distribution of the student and teacher by minimizing the reverse-KL divergence. However, this divergence is known to be mode-seeking~\cite{Bishop2006PatternRA} and can potentially ignore diverse modes learned by the diffusion model.

The reverse-KL divergence is a member of the broader $f$-divergence family~\cite{Rnyi1961OnMO}. The $f$-divergence represents a large family of divergences including reverse-KL, forward-KL, Jensen-Shannon (JS), squared Hellinger, etc. These divergences come with different trade-offs, including how they penalize the student for missing modes in the teacher distribution and how they can be estimated and optimized using Monte Carlo sampling. However, the application of arbitrary $f$-divergences to diffusion distillation, and the practical estimation of the student's gradient, remain open challenges.  

In this work, we address these challenges by establishing the connection between $f$-divergence and diffusion distillation with a novel distillation framework, which we term \method. We derive the gradient of $f$-divergence distribution matching within the context of diffusion distillation, and show that it is the product of the difference in score between teacher and student (which also exists in prior works), and a weighting function that depends on density ratio and the chosen $f$-divergence (new in this work), as illustrated in Fig.~\ref{fig:teaser}. The density ratio can be readily obtained from the discriminator in the commonly used auxiliary GAN objective. We show that the previous DMD approach is a special case of our approach that corresponds to a constant weighting. We discuss how the newly derived weighting coefficient influences the tradeoffs discussed above and propose normalization techniques for stabilizing divergences with higher gradient variance. As shown in Fig.~\ref{fig:toy_diff}, we observe that the weight coefficient for less mode-seeking $f$-divergences will downweight the score difference in the areas where the teacher has low density. This is in line with the observation that score estimation in low-density regions can be inaccurate~\cite{karras2024guiding} and allows our model to adaptively rely less on matching its score with the teacher's \textit{unreliable} score on such regions. 

We further analyze the properties of various canonical $f$-divergences within our framework. For instance, forward-KL has a better mode coverage, but has a large gradient variance; JS demonstrates moderate mode-seeking and gradient saturation, particularly in early training stages, but exhibits low variance. Our analysis reveals that no single $f$-divergence consistently outperforms others across all datasets. We observe divergences with better mode coverage tendencies generally perform better on the CIFAR-10 dataset. However, on large-scale challenging datasets like ImageNet-64 and text-to-image generation with Stable Diffusion, divergences with lower variance achieve superior results. Empirically, we validate the \methodtext framework on several image generation tasks. Quantitative results demonstrate that the less mode-seeking divergences in \methodtext consistently outperform previous best variational score distillation approaches. Notably, by minimizing the less mode-seeking and lower gradient variance Jensen-Shannon divergences, \methodtext achieves new state-of-the-art one-step generation performance on ImageNet-64 and zero-shot MS-COCO (using SD v1.5). We also show that \methodtext is scalable to larger model SDXL, as shown in Fig~\ref{fig:vis-main}. Our empirical analysis also confirms that the weighting function effectively assigns smaller weights to regions with larger score differences.

\begin{figure*}[t]
    \centering
    \vspace{-0.8cm}
    \includegraphics[width=0.8\linewidth, clip=true, trim={0, 0, 0, 0.5cm,}]{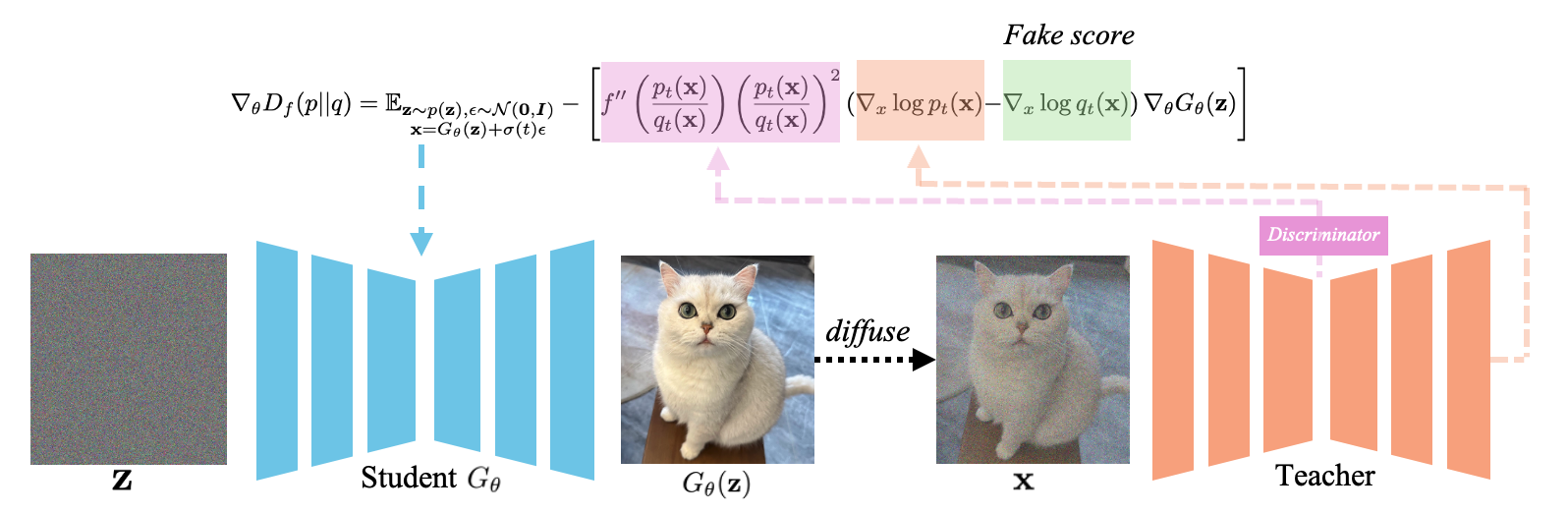}
    \vspace{-8pt}
    \caption{The gradient update in \methodtext is a product of the difference between the teacher and fake scores and a weighting function determined by the chosen $f$-divergence and density ratio. The density ratio is readily available in the auxiliary GAN objective. 
    }
    \label{fig:teaser}
    \vspace{-10pt}
\end{figure*}


 \begin{figure}
     \centering
     \includegraphics[width=0.9\linewidth, trim=0.cm 0.2cm 0.cm 0.2cm, clip]{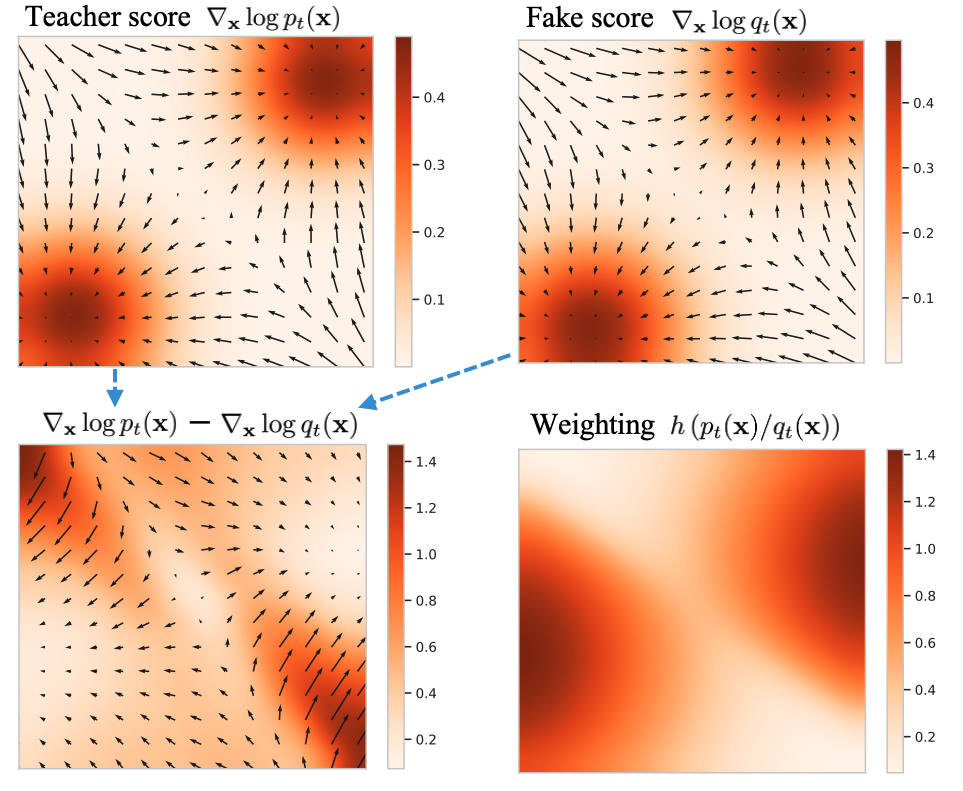}
     \caption{Score difference and the weighting function on a 2D example. $h$ is the weighting function in forward-KL.  Observe that the teacher and fake scores often diverge in lower-density regions (darker colors in the bottom left figure indicate larger score differences), where larger estimation errors occur. The weighting function downweights these regions~(lighter colors in the bottom right figure) during gradient updates for \method.\looseness=-1}
     \label{fig:toy_diff}
     \vspace{-12pt}
 \end{figure}

\textbf{Contributions.} \textit{(i)} We derive the gradient of the $f$-divergence in distribution matching distillation, enabling the application of arbitrary $f$-divergences. \textit{(ii)} We discuss different trade-offs with different choices of $f$-divergence in terms of mode seeking, gradient saturation and variance. \textit{(iii)} We provide practical guidelines on reducing the variance of gradient and estimating different terms in the objective efficiently. \textit{(iv)} We empirically show that our proposed \methodtext achieves the state-of-the-art FID score in one-step generation on the ImageNet-64 and zero-shot MS-COCO text-to-image benchmark. \looseness=-1

\section{Background}

\subsection{Diffusion models}
The goal of \methodtext is to accelerate the generation of pre-trained (continuous-time) DMs~\cite{song2020score, ho2020ddpm}. In this paper, we follow the popular EDM framework~\cite{Karras2022ElucidatingTD} for the notations and forward/backward processes. DMs perturb the clean data {$\rvx_0 \sim p_{\textrm{data}}$} in a fixed forward process using $\sigma^2(t)$-variance Gaussian noise, where $\rvx_0 \in \R^d$ and $t$ denotes the time along the diffusion process. The resulting intermediate distribution is denoted as $p_t(\rvx_t)$ with $\rvx_t \in \R^d$. For notation simplicity, we will use $\rvx$ to replace $\rvx_t$, unless stated otherwise, throughout the paper.
For sufficiently large $\sigma_\textrm{max}$, this distribution is almost identical to pure random Gaussian noise. DMs leverage this observation to sample the initial noise $\epsilon_\text{max} \sim \mathcal{N}( \mathbf{0},\sigma_\textrm{max}^2\mI)$, and then iteratively denoise the sample by solving the following backward ODE/SDE, which guarantees that if $\sigma(0)=0$, the final $\rvx$ follows the data distribution $p_{\textrm{data}}$:
{\small
\begin{equation}\label{eq:generation_sde}
\begin{split}
    & d\rvx = \underbrace{-\dot{\sigma}(t)\sigma(t)\boldsymbol{\nabla}_\rvx \log p_t(\rvx) dt}_{\textrm{Probability Flow ODE}} \\
    &\underbrace{- \beta(t)\sigma^2(t) \boldsymbol{\nabla}_\rvx \log p_t(\rvx) dt + \sqrt{2\beta(t)}\sigma(t)d\boldsymbol{\omega}_t}_{\textrm{Langevin Diffusion SDE}},
\end{split}
\end{equation}
}%
where $\boldsymbol{\omega}_t$ is a standard Wiener process and $\boldsymbol{\nabla}_\rvx \log p_t(\rvx)$ is the \textit{score function} of the intermediate distribution $p_t(\rvx)$. The score function is learned by a neural network $s_\phi(\rvx; \sigma(t))$ trained with the denoising score matching objective~\cite{vincent2011,song2019generative}. In \Cref{eq:generation_sde}, the first term is the Probability Flow ODE,  which guides samples from high to low noise levels. The second term is a Langevin Diffusion SDE, which acts as an equilibrium sampler across different noise levels $\sigma(t)$~\cite{Karras2022ElucidatingTD, Xu2023RestartSF}. This component can be scaled by the time-dependent parameter $\beta(t)$ with $\beta(t)=0$ leading to ODE-based synthesis. 
Although different kinds of accelerated samplers for diffusion ODE~\cite{Song2020DenoisingDI, lu2022dpm, Karras2022ElucidatingTD} and SDE~\cite{JolicoeurMartineau2021GottaGF, Karras2022ElucidatingTD, Xu2023RestartSF} have been proposed, they usually still require $>20$ sampling steps in practice to produce decent samples.

\subsection{Variational score distillation}

A recent line of works~\cite{yin2024one, yin2024improved} aim to distill the teacher diffusion models $s_\phi$ into a single step generator $G_\theta$, through \textit{variational score distillation (VSD)}, which is originally introduced for test-time optimization of 3D objects~\cite{Wang2023ProlificDreamerHA}. The goal is to enable a student model $G_\theta$ to directly map the noise $\rvz$ from the prior distribution $p(\rvz) = \mathcal{N}(\rvz;  \mathbf{0}, \mI)$ to the clean sample $\rvx_0$ at $\sigma=0$ using $\rvx_0 = G_\theta(\rvz)$, effectively bypassing the iterative sampling process. Let $p_\phi$ denote the distribution obtained by plugging in pre-trained diffusion models $s_\phi(\rvx; \sigma(t))$ in \Cref{eq:generation_sde}, and let $q_\theta$ denote the output distribution by the one-step generator $G_\theta$~(in the following text, we drop the subscript in $p_\phi$ and $q_\theta$ for notation simplicity). Then, the gradient update for the generator can be formulated as follows:
\begin{align*}
     \E_{t, \rvz, \epsilon} \left[ \left(s_\phi(\rvx; \sigma(t)) - \nabla_\rvx \log q_\theta(\rvx; \sigma(t))\right) \nabla_\theta G_\theta(\rvz)\right]
     \numberthis \label{eq:vsd-obj}
\end{align*}
where $\rvx = G_\theta (\rvz) + \sigma(t) \epsilon $ and $\epsilon \sim \mathcal{N}(  \mathbf{0}, \mI)$.
Intuitively, the gradient encourages the generator to produce samples that lie within high-density regions of the data distribution. This is achieved through the teacher score term, $s_\phi(\rvx; \sigma(t))$, which guides the generated samples towards areas where the teacher model assigns high probability. To prevent mode collapse, the gradient also incorporates a term that discourages the generator from simply concentrating on a single high-density point in the teacher's distribution. This is done by subtracting the score of the student distribution, $\nabla_\rvx \log q_\theta(\rvx; \sigma(t))$. The gradient update is shown to perform distribution matching by minimizing the reverse-KL divergence between the teacher and student distributions~\cite{poole2023dreamfusion, yin2024one}. 

To estimate the score of the student distribution, previous works~\cite{Wang2023ProlificDreamerHA, yin2024one} have employed another \textit{fake score} network $s_\psi (\rvx, \sigma(t))$ to approximate  $\nabla_\rvx \log q_\theta(\rvx; \sigma(t))$. The fake score network $s_\psi (\rvx, \sigma(t))$ is dynamically updated with the standard denoising score matching loss, where the ``clean'' samples come from the generator $G_\theta$ during training. 
Thus, the VSD training alternates between the generator update and the fake score update, with a two time-scale update rule for stabilized training~\citep{yin2024improved}.
Additionally, to further close the gap between the one-step generator and the multi-step teacher diffusion model, a GAN loss is applied to the VSD training pipeline~\citep{yin2024improved}, where a lightweight GAN classifier takes as input the middle features from the fake score network.



\subsection{$f$-divergence}

In probability theory, an $f$-divergence~\cite{Rnyi1961OnMO} quantifies the difference between two probability density functions, $p$ and $q$.  Specifically, when $p$ is absolutely continuous with respect to $q$, the $f$-divergence is defined as:
\begin{align*}
D_f(p||q) = \int q(\rvx) f\Big(\frac{p(\rvx)}{q(\rvx)}\Big) d\rvx
\end{align*}
where $f$ is a convex function on $(0,+\infty)$ satisfying $f(1)=0$. This divergence satisfies several important properties, including non-negativity and the data processing inequality. Many commonly used divergences can be expressed as special cases of the $f$-divergence by choosing an appropriate function $f$. These include the forward-KL divergence, reverse-KL divergence, Hellinger distance, and Jensen-Shannon (JS) divergence, as shown in Table~\ref{tab:f-div}. In generative learning, $f$-divergence has been widely applied to popular generative models, such as GANs~\citep{nowozin2016f}, VAEs~\citep{wan2020fvi}, energy-based models~\citep{yu2020training} and diffusion models~\citep{tang2024fine}.

\section{Method: general $f$-divergence minimization}

\begin{table*}[t]
\vspace{-0.7cm}
\small
    \centering
    \begin{tabular}{l c c c c c}
 \toprule
  & $f(r)$ & $h(r)$ & Mode-seeking? & Saturation? & Variance \\
  \midrule
  reverse-KL & $-\log r$ & $1$ & Yes & No & - \\
    softened RKL & $(r+1) \log{(\frac12+\frac{1}{2r})}$ & $\frac{1}{r+1}$ & Yes  & No & Low \\
Jensen-Shannon & $r\log r - (r+1) \log \frac{r+1}{2}$ & $\frac{r}{r+1}$ & Medium & Yes & Low \\
  squared Hellinger &
     $1-\sqrt{r}$ & $\frac14 r^{\frac12}$
 & Medium & Yes & Low \\ 
  forward-KL & $r\log r$ & $r$ & No & No & High \\
 Jeffreys & ${(r-1)\log(r)}$ & $r+1$ & No & No & High \\
    \bottomrule
    \end{tabular}
        \caption{Comparison of different $f$-divergences as a function of the likelihood ratio $r :=p(\rvx)/q(\rvx)$}
    \label{tab:f-div}
    \vspace{-12pt}
\end{table*}

In this section, we introduce a general distillation framework, termed \method, based on minimizing the $f$-divergence between the teacher and student distributions. Since the student distribution $q$ is the push-forward measure induced by the one-step generator $G_\theta$, it implicitly depends on the generator's parameters $\theta$. Due to this implicit dependency, directly calculating the gradient of $f$-divergence, $D_f(p||q)$, w.r.t $\theta$ presents a challenge. However, the following theorem establishes the analytical expression for this gradient, revealing that it can be formulated as a weighted version of the gradient employed in variational score distillation. Notably, these weights are determined by the density ratio of the generated samples. We state the theorem more generally by providing the gradient for $p_t$ and $q_t$, where $p_t$ is the perturbed distribution through the diffusion forward process for the teacher's distribution $p$, \textit{i.e.,} $p_t = p_0 * \gN( \mathbf{0},\sigma^2(t)\mI)$~(same for the student distribution $q$). \looseness=-1
\vspace{-2pt}
\begin{restatable}{theorem}{mainthm}
\label{thm-main}
Let $p$ be the teacher's generative distribution, and let $q$ be a distribution induced by transforming a prior distribution $p(\rvz)$ through the differentiable mapping $G_\theta$. Assuming $f$ is twice continuously differentiable, then the gradient of $f$-divergence between the two intermediate distribution $p_t$ and $q_t$ w.r.t $\theta$ is:
{
\small
\begin{align*} 
    &\nabla_\theta D_f(p_t||q_t) = \E_{\substack{\rvz,  \epsilon}}-\Big[f''\left(\frac{p_t(\rvx)}{q_t(\rvx)}\right)\left(\frac{p_t(\rvx)}{q_t(\rvx)}\right)^2\\
    & \quad  \quad \quad \quad \quad \ \ \Big(\underbrace{\nabla_\rvx \log p_t(\rvx)}_{\textrm{teacher score}} - \underbrace{\nabla_\rvx \log q_t(\rvx)}_{\textrm{fake score}} \Big)  \nabla_\theta G_\theta(\rvz)\Big]
    \numberthis \label{eq:time-0-loss}
\end{align*}
}%
where $\rvz \sim p(\rvz), \epsilon \sim \mathcal{N}(  \mathbf{0}, \mI)$ and $ \rvx = G_\theta(\rvz)+\sigma(t)\epsilon $

\end{restatable}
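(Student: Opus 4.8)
The plan is to treat $D_f(p_t\|q_t)$ as a functional of the single $\theta$-dependent object — the student density $q_t$, since the teacher density $p_t$ is fixed — and to push the derivative through in two stages: first an ordinary chain rule in the density, then a transport (continuity) identity that converts $\nabla_\theta q_t$ into something containing $\nabla_\theta G_\theta$. Writing $u := p_t(\rvx)/q_t(\rvx)$ for the density ratio, differentiation under the integral sign in $D_f(p_t\|q_t)=\int q_t(\rvx) f(u)\,d\rvx$ gives
\[
\nabla_\theta D_f(p_t\|q_t)=\int \Big[f(u)-u\,f'(u)\Big]\,\nabla_\theta q_t(\rvx)\,d\rvx,
\]
because the pointwise derivative of $w\mapsto w\,f(p_t/w)$ evaluated at $w=q_t$ equals $f(u)-u f'(u)$.

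The second stage handles the implicit dependence of $q_t$ on $\theta$. Since $q_t$ is the law of $\rvx=G_\theta(\rvz)+\sigma(t)\epsilon$, an infinitesimal change in $\theta$ transports each sample with velocity $\nabla_\theta G_\theta(\rvz)$, so the density obeys the continuity equation $\nabla_\theta q_t(\rvx)=-\nabla_\rvx\!\cdot\!\big(q_t(\rvx)\,v_\theta(\rvx)\big)$, where $v_\theta(\rvx):=\E[\nabla_\theta G_\theta(\rvz)\mid \rvx]$ is the conditional mean displacement. Substituting this and integrating by parts (boundary terms vanish because $q_t$ is Gaussian-smoothed and decays at infinity) moves the spatial gradient onto the bracket:
\[
\nabla_\theta D_f(p_t\|q_t)=\int \nabla_\rvx\!\big[f(u)-u f'(u)\big]\cdot q_t(\rvx)\,v_\theta(\rvx)\,d\rvx.
\]

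The payoff is a clean cancellation. Since $\nabla_\rvx[f(u)-u f'(u)] = -u\,f''(u)\,\nabla_\rvx u$ and $\nabla_\rvx u = u\,(\nabla_\rvx\log p_t(\rvx)-\nabla_\rvx\log q_t(\rvx))$, the gradient of the bracket equals $-f''(u)\,u^2\,(\nabla_\rvx\log p_t - \nabla_\rvx\log q_t)$ — exactly the weight $f''(u)u^2$ times the teacher-minus-fake score difference. Finally I would undo the reparameterization: by the tower property, $\int q_t(\rvx)\,v_\theta(\rvx)\,\Phi(\rvx)\,d\rvx = \E_{\rvz,\epsilon}[\nabla_\theta G_\theta(\rvz)\,\Phi(\rvx)]$ for any $\Phi$, which replaces the expectation over $q_t$ by one over $\rvz,\epsilon$ and reinstates $\nabla_\theta G_\theta(\rvz)$, yielding the stated formula.

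The main obstacle is the middle step: correctly accounting for the implicit $\theta$-dependence of $q_t$. A naive reparameterization differentiates only the sampling measure (the outer $q_t$) and produces an $f'$ term, missing the contribution from the $q_t$ sitting inside $f(p_t/q_t)$; it is precisely the continuity-equation term, after integration by parts, whose $f'$ piece cancels the reparameterization's $f'$ and upgrades it to $f''$. Justifying the continuity identity for the push-forward density, and the vanishing of the boundary terms, is therefore the crux — and it is exactly what distinguishes the general $f$-divergence gradient from the reverse-KL case, where $f''(r)r^2=1$ collapses the weight to a constant and recovers the VSD gradient of \Eqref{eq:vsd-obj}.
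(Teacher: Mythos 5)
Your proof is correct, and it reaches the theorem by a genuinely different route to the same key identity. The paper splits $\nabla_\theta D_f(p_t\|q_t)$ into two terms (I) and (II), each of the form $\int \nabla_\theta q_t(\rvx)\, g(\rvx)\, d\rvx$, then applies a reparameterization lemma (Lemma~1) --- proved simply by rewriting $\int q_t g$ as $\E_{\rvz,\epsilon}\left[g(G_\theta(\rvz)+\sigma(t)\epsilon)\right]$ and differentiating through the integral --- after which the $f'$ contributions of (I) and (II) cancel. You instead fold the chain rule into the single bracket $f(u)-u f'(u)$ with $u=p_t/q_t$ up front, so the cancellation happens pointwise in $\nabla_\rvx\left[f(u)-u f'(u)\right]=-u f''(u)\nabla_\rvx u$, and you handle the implicit $\theta$-dependence via the continuity equation $\nabla_\theta q_t = -\nabla_\rvx\cdot\left(q_t v_\theta\right)$ with the conditional-mean velocity $v_\theta(\rvx)=\E\left[\nabla_\theta G_\theta(\rvz)\mid \rvx\right]$, followed by integration by parts and the tower property. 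Note that these three steps taken together constitute an alternative proof of the paper's Lemma~1, i.e.\ $\int \nabla_\theta q_t(\rvx)\, g(\rvx)\, d\rvx = \E_{\rvz,\epsilon}\left[\nabla_\rvx g(\rvx)\,\nabla_\theta G_\theta(\rvz)\right]$, so the two arguments hinge on the same identity but prove and deploy it differently. What each buys: your transport formulation makes the mechanism transparent (an infinitesimal change in $\theta$ moves samples with velocity $\nabla_\theta G_\theta$, a Wasserstein-flow picture) and dispatches the $f'$ cancellation in one line; but it incurs two obligations the paper sidesteps --- justifying the weak continuity equation for the push-forward density (with $v_\theta$ matrix-valued for vector $\theta$, handled componentwise) and the vanishing of boundary terms, which requires $q_t v_\theta\left[f(u)-u f'(u)\right]\to 0$ at infinity and thus implicitly constrains $f$ and the tails of the density ratio (e.g.\ for forward-KL the bracket equals $-u$, so decay of $p_t v_\theta$ is what is actually needed). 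The paper's lemma, by reparameterizing before differentiating, never generates boundary terms and needs only a dominated-convergence-type interchange, so it is the more elementary path; your sign bookkeeping is consistent throughout and your final expression matches \Eqref{eq:time-0-loss}.
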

\begin{proofs} For simplicity, we prove the $t=0$ case in the main text. Similar proof applies for any $t>0$.
{\small
\begin{align*}
&\nabla_\theta D_f(p(\rvx)||q(\rvx)) =   \nabla_\theta \int q(\rvx) f(\frac{p(\rvx)}{q(\rvx)}) d\rvx \\
    &=  \underbrace{\int \nabla_\theta q(\rvx) f(\frac{p(\rvx)}{q(\rvx)}) d\rvx}_{I} - \underbrace{\int \nabla_\theta q(\rvx) f'(\frac{p(\rvx)}{q(\rvx)}) \frac{p(\rvx)}{q(\rvx)}  d\rvx}_{II}
\end{align*}
}%
It can be shown that (I) / (II) arises from the term associated with the partial derivative of $f$ with respect to $\rvx$ / $q$, respectively. Above we see that both partial derivatives (I) and (II) are in the form $\int \nabla_\theta q(\rvx) g(\rvx) d\rvx$ where $g$ is a differentiable function that is constant with respect to $\theta$. Assuming that sampling from $\rvx \sim q(\rvx)$ can be parameterized to $\rvx = G_\theta(\rvz)$ for $\rvz \sim p(\rvz)$, we can use the identity $\int \nabla_\theta q(\rvx) g(\rvx) d\rvx = \int p(\rvz) \nabla_\rvx g(\rvx) \nabla_\theta G_\theta(\rvz) d\rvz$. The proof for the identity is provided in the Appendix. Using the identity we can simplify (I) and (II) to:
{
\begin{align*}
I &= \int p(\rvz) f'(\frac{p(\rvx)}{q(\rvx)}) \nabla_\rvx \frac{p(\rvx)}{q(\rvx)} \nabla_\theta G_\theta(\rvz) d\rvz \\
II &= \int p(\rvz) f''(\frac{p(\rvx)}{q(\rvx)}) \frac{p(\rvx)}{q(\rvx)} \nabla_\rvx \frac{p(\rvx)}{q(\rvx)} \nabla_\theta G_\theta(\rvz) d\rvz \\
& + \int p(\rvz) f'(\frac{p(\rvx)}{q(\rvx)}) \nabla_\rvx \frac{p(\rvx)}{q(\rvx)} \nabla_\theta G_\theta(\rvz) d\rvz 
\end{align*}
}%
Putting (I) and (II) in \Eqref{eq:time-0-loss}, we have:
{
\begin{align*}
    \nabla_\theta D_f = &-\int p(\rvz) f''(\frac{p(\rvx)}{q(\rvx)}) \frac{p(\rvx)}{q(\rvx)} \nabla_\rvx \frac{p(\rvx)}{q(\rvx)} \nabla_\theta G_\theta(\rvz) d\rvz \\
= &-\int p(\rvz) f''\left(\frac{p(\rvx)}{q(\rvx)}\right)\left(\frac{p(\rvx)}{q(\rvx)}\right)^2\\
    & \quad \quad \left[\nabla_\rvx \log p(\rvx) - \nabla_\rvx \log q(\rvx)\right] \nabla_\theta G_\theta(\rvz) d\rvz 
\end{align*}
}%
where the last identity is from the log derivative trick. 
\end{proofs}

We defer the completed proofs to App~\ref{app:proof} in the supplementary material. Although the student's generative distribution $q$ depends on the parameter $\theta$, Theorem~\ref{thm-main} provides an analytical expression for the gradient of $f$-divergences between the teachers' and students' generative distributions. This gradient is expressed as the score difference between the teacher's and student's distributions, weighted by a time-dependent factor $f''\left({p_t(\rvx_t)}/{q_t(\rvx_t)}\right)\left({p_t(\rvx_t)}/{q_t(\rvx_t)}\right)^2$ determined by both the chosen $f$-divergence and the density ratio. Crucially, every term in the theorem is tractable
, enabling the optimization of distributional matching through general $f$-divergence minimization. For notation convenience, let $h(r) := f''(r)r^2$ denote the weighting function, and $r_t(\rvx) := p_t(\rvx)/q_t(\rvx)$ denote the density-ratio at time $t$. It is worth noting that the gradient of the variational score distillation~(\Eqref{eq:vsd-obj}) can be recovered as a special case of our framework by setting $h(r)\equiv 1$ 
in \Eqref{eq:time-0-loss}, which corresponds to minimizing the reverse-KL divergence~($f(r)=-\log r $). 

\cite{song2021maximum} also shows a connection between f-divergence and score difference, expressing the former as a time integral of the squared score difference in their Theorem 2. However, our formulation differs in two key aspects: (1) Their objective's gradient necessitates computing a Jacobian-vector product, which can be computationally expensive. 
(2) While their $f$-divergence is expressed as an integral of the score difference over time, our $f$-divergence, $D_f(p_t||q_t)$, depends only on the weighting and score at time $t$. In the following proposition, we further show that if the weighting function $h$ is continuous and non-negative on $(0, +\infty)$, then its product with score difference is the gradient of certain $f$-divergence:
\vspace{-5pt}
\begin{restatable}{proposition}{mainprop}\label{prop:general}
For any function $h$ that is continuous and non-negative on $(0, +\infty)$, the expectation 
$\E_{\substack{\rvz,  \epsilon}}-\left[h\left(r_t(\rvx)\right)\right.\left({\nabla_\rvx \log p_t(\rvx)} - {\nabla_\rvx \log q_t(\rvx)} \right)  \nabla_\theta G_\theta(\rvz)]
$
corresponds to the gradient of an $f$-divergence.
\end{restatable}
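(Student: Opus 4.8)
The plan is to read Proposition~\ref{prop:general} as the converse of Theorem~\ref{thm-main}. Theorem~\ref{thm-main} already shows that, for any twice continuously differentiable convex generator $f$ with $f(1)=0$, the gradient $\nabla_\theta D_f(p_t||q_t)$ equals exactly the displayed expectation with weighting function $h(r)=f''(r)r^2$. Therefore it suffices to prove a \emph{surjectivity} statement: given an arbitrary continuous, non-negative $h$ on $(0,+\infty)$, exhibit a valid generator $f$ whose induced weighting $f''(r)r^2$ equals $h(r)$. Once such an $f$ is produced, the claimed expectation is literally $\nabla_\theta D_f(p_t||q_t)$ by Theorem~\ref{thm-main}, and we are done.

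First I would solve the defining relation $h(r)=f''(r)r^2$ for $f''$, namely $f''(r)=h(r)/r^2$, and recover $f$ by integrating twice. Concretely, set $f'(r)=\int_1^r h(s)/s^2\,ds$ and then $f(r)=\int_1^r f'(u)\,du$. Because $h$ is continuous on $(0,+\infty)$, the integrand $h(s)/s^2$ is continuous on every compact subinterval of $(0,+\infty)$, so both iterated integrals are well defined and finite for each $r\in(0,+\infty)$ (the integration ranges are bounded and bounded away from $0$).

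Next I would check that this $f$ meets the three hypotheses needed to invoke Theorem~\ref{thm-main}. (i) $f(1)=0$ holds by construction, since the outer integral starts at $1$. (ii) $f$ is twice continuously differentiable, as $f''=h/r^2$ is continuous, hence $f'$ and $f$ are $C^2$. (iii) $f$ is convex, since $f''(r)=h(r)/r^2\ge 0$ by the non-negativity of $h$. Hence $D_f$ is a bona fide $f$-divergence whose induced weighting is $f''(r)r^2=h(r)$, so applying Theorem~\ref{thm-main} identifies the given expectation with $\nabla_\theta D_f(p_t||q_t)$.

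The only subtlety---and the part worth stating carefully---is the role of the integration constants, i.e.\ the freedom in choosing $f'(1)$ (here fixed to $0$) and the value $f(1)$. Changing these adds an affine term $a(r-1)+b$ to $f$; the constant $b$ is forced to $0$ by $f(1)=0$, while any linear term $a(r-1)$ contributes $a\int\!\big(p_t(\rvx)-q_t(\rvx)\big)\,d\rvx=a(1-1)=0$ to the divergence and thus leaves the gradient unchanged. So the constructed generator is canonical up to such divergence-invariant terms, and crucially no integrability of $h$ near the endpoints $0$ or $+\infty$ is required: we only need $f$ itself to be a valid generator, not that $D_f(p_t||q_t)$ be finite for the particular pair at hand.
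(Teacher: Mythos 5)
Your proposal is correct and follows essentially the same route as the paper's own proof: both solve $f''(r)=h(r)/r^2$, recover a generator $f$ with $f(1)=0$ via antiderivatives (the paper invokes the fundamental theorem of calculus abstractly where you write the explicit double integral $f(r)=\int_1^r\!\int_1^u h(s)/s^2\,ds\,du$), verify convexity and smoothness from the non-negativity and continuity of $h$, and then invoke Theorem~\ref{thm-main}. Your added remarks on the affine integration constants being divergence-invariant and on not needing integrability of $h$ at the endpoints are correct refinements of, not departures from, the paper's argument.
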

Although we limit our study in this paper to canonical forms of $f$-divergence, Proposition~\ref{prop:general} allows us to use any continuous and non-negative scalar function as $h$.

In practice, \cite{yin2024one} suggests performing distributional matching all the time along the diffusion process, as teacher and student will have high discrepancy at smaller times, leading to optimization difficulties. We follow this setup and minimize the $f$-divergence along the whole time range, \ie $\gL(\theta) = \int_0^T w_tD_f(p_t||q_t)dt$, where $w_t$ is a time-dependent weight for equalizing the gradient magnitudes across times. The final objective function for \methodtext is as follows:
{
\begin{align*}
    &\gL_{\textrm{\method}}(\theta) = \E_{t, \rvx}\Big[ \numberthis \label{eq:obj-final} \\
    & \texttt{\color{red}sg\big(} w_t h(r_t(\rvx))({\nabla_\rvx \log p_t(\rvx)} - {\nabla_\rvx \log q_{t}(\rvx)}){\color{red}\big)}^T \rvx\Big]
\end{align*}
}%
where $\rvx {=} G_\theta (\rvz) {+} \sigma(t) \epsilon , \rvz {\sim} p(\rvz), \epsilon {\sim} \mathcal{N}(  \mathbf{0}, \mI)$, and $\texttt{sg}$ stands for stop gradient. The gradient of \Eqref{eq:obj-final} equals the time integral of the gradient of $f$-divergence in Theorem 1~(\Eqref{eq:time-0-loss}). In practice, the score of student distribution $\nabla_\rvx \log q_{t}(\rvx_t)$ is approximated by an online diffusion model $s_\psi (\rvx, \sigma(t))$.  \cite{yin2024improved} augments the variational score distillation loss with a GAN objective to further enhance performance. This is motivated by the fact that variational score distillation relies solely on the teacher's score function and is therefore limited by the teacher's capabilities. Incorporating a GAN objective allows the student generator $G_\theta$ to surpass the teacher's limitations by leveraging real data to train a discriminator $D_{\lambda}$: 
\begin{align*}
    &\gL_{\textrm{GAN}}(\lambda) = \E_{t, \rvx \sim p_{\textrm{data}}, \epsilon_1}[\log D_\lambda(\rvx+ \sigma(t) \epsilon_1)] + \\
    &\qquad\qquad \quad\quad \E_{t, \rvz , \epsilon_2}[\log (1-D_\lambda(G_\theta (\rvz) + \sigma(t) \epsilon_2))]
\end{align*}
where $\rvz {\sim} p(\rvz), \epsilon_1, \epsilon_2 \sim \mathcal{N}(\mathbf{0}, \mI) $. We incorporate the auxiliary GAN objective as in prior work, which offers the additional advantage of providing a readily available estimate of the density ratio $r(\rvx_t)$ required by the weighting function in \Eqref{eq:obj-final}. The density ratio is approximated as follows: $r(\rvx_t)= p_t(\rvx)/q_t(\rvx) \approx p_{\textrm{data}, t}/q_t(\rvx) = D_{\lambda}(\rvx_t, t)/(1-D_{\lambda}(\rvx_t, t))$.  In essence, the GAN discriminator $D_{\lambda}$ provides a direct estimate of the density ratio, facilitating the computation of the weighting function. 





\section{Comparing properties of $f$-divergence}

\label{sec:properties}

In this section, we compare the properties across different distance measures in the $f$-divergence family, in the context of diffusion distillation. We will inspect their three properties: mode-seeking, saturation, and variance during training. We summarize the comparison of different $f$s, and their corresponding weighting function $h$, in Table~\ref{tab:f-div}.

\vspace{-10pt}
\paragraph{Mode-seeking.} Mode-seeking divergences~\cite{Bishop2006PatternRA, pmlr-v206-ting-li23a}, such as reverse-KL, encourage the generative distribution $q$ only to capture a subset of the modes of data distribution. This behavior, however, is undesirable for generative models as it can lead to dropped modes and a loss of diversity in generated samples. This phenomenon is observed in the variational score distillation loss~\cite{lu2024simplifying} used in DMD~\cite{yin2024one, yin2024improved}, which corresponds to minimizing the reverse-KL divergence in \methodtext. One way to characterize mode-seeking behavior is by examining the limit $\lim_{r\to \infty}f(r)/r$~\cite{pmlr-v206-ting-li23a}. A lower growth rate of the limit indicates more mode-seeking~(we defer detailed discussions to Sec~\ref{app:property}). Both reverse-KL and JS divergences exhibit this finite limit, with JS having a higher growth rate (and thus, less mode-seeking behavior). In contrast, forward-KL has an infinite limit, echoing its well-known mode-covering property. 

It is noteworthy that in Table~\ref{tab:f-div}, divergences with a stronger tendency towards mode-seeking also exhibit a slower rate of increase in their weighting function $h(r)$ as $r{\to} \infty$. This behavior stems from the fact that $f''(r)=h(r)/r^2$ also increases more slowly, thus tolerating larger density ratios $p/q$ (\ie allowing $q$ to disregard some modes in $p$), ultimately leading to mode-seeking behavior~\cite{Shannon2020NonsaturatingGT}.
For example, $h$ in JS and forward-KL is an increasing function, while in reverse-KL $h$ stays constant. As a result, the weighting function in less mode-seeking divergence will tend to downweight samples in low-density regions of teacher distribution.  \looseness=-1

\vspace{-10pt}

\paragraph{Saturation.} A challenge encountered by prior generative models, such as GANs~\cite{goodfellow2014generative}, when utilizing $f$-divergence is the issue of saturation.  In the early stages of training, the generative and data distribution are often poorly aligned, resulting in samples from $q$ having very low probability under $p$, and vice versa. Consequently, the density ratio $p/q$ tends to be either extremely large or near zero. This poses optimization issues when divergences have small gradients at both extremes. From Fig.~\ref{fig:f}, we can see that squared Hellinger and JS divergences have smaller gradients at the extremes. Nevertheless, in diffusion distillation literature~\cite{yin2024one, yin2024improved}, the saturation issue is mitigated by initializing the weights of the student model with the pre-trained diffusion models.

\begin{figure}[t]
    \centering
    \vspace{-0.6cm}
  \begin{subfigure}[b]{0.234\textwidth}
    \includegraphics[width=\textwidth, trim=0.cm 0.cm 0.6cm 1.cm, clip]{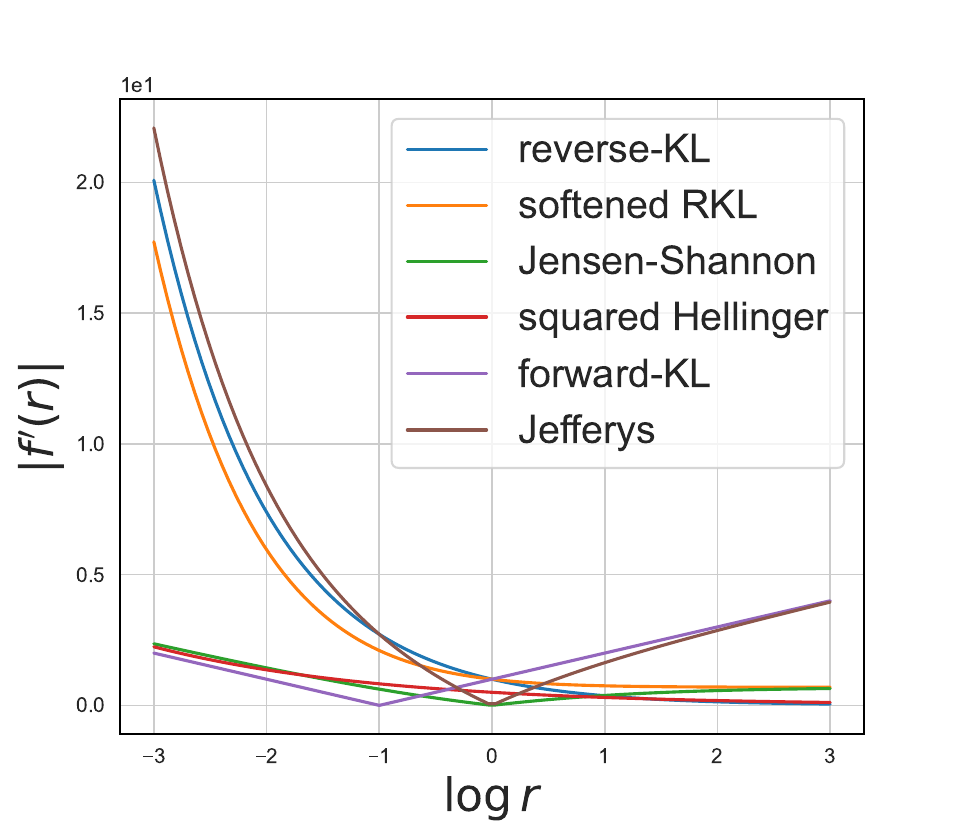}
        \vspace{-9pt}
    \caption{}
        \label{fig:f}
  \end{subfigure}
 \hfill
  \begin{subfigure}[b]{0.234\textwidth}
    \includegraphics[width=\textwidth, trim=0.cm 0.cm 0.6cm 1.cm, clip]{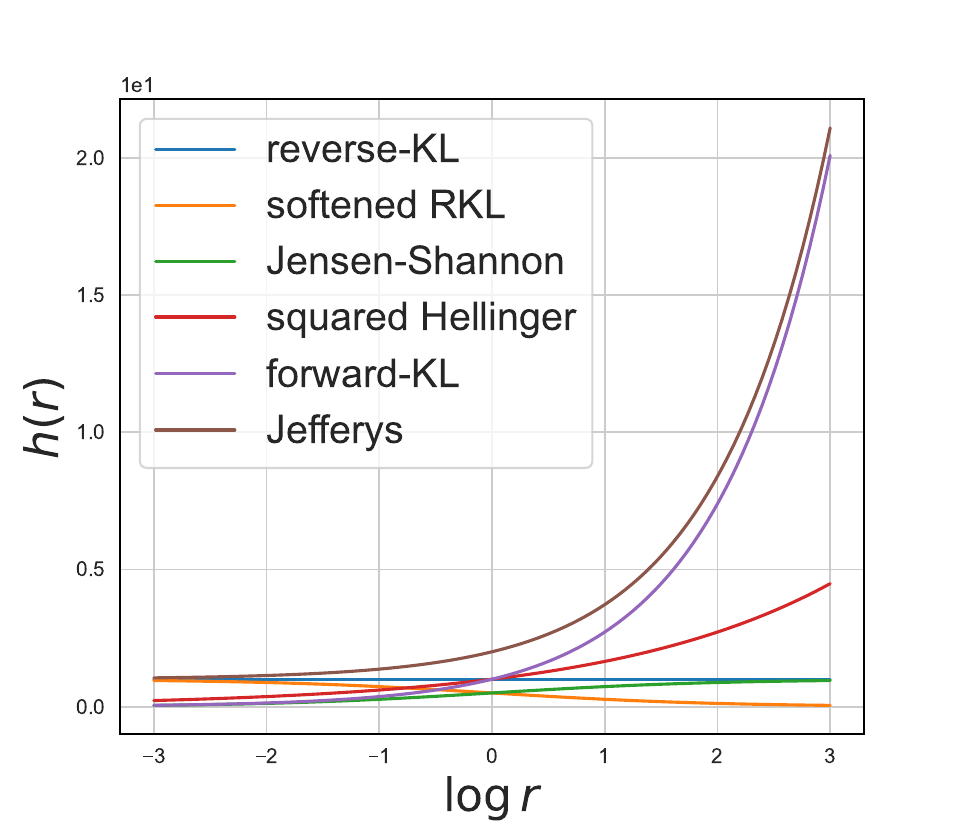}
    \vspace{-9pt}
    \caption{}
        \label{fig:h}
  \end{subfigure}
  \vspace{-6pt}
    \caption{The absolute value of $f'$~\textbf{(a)} and weighting function $h(r)$~\textbf{(b)} in different $f$-divergences.}
    \vspace{-15pt}
\end{figure}

\begin{table}[b]
\vspace{-8pt}
\footnotesize
    \centering
    \begin{tabular}{l c c c}
    \toprule
    & FID $\downarrow$ & Recall $\uparrow$ \\
    \midrule    EDM~\cite{Karras2022ElucidatingTD}~(NFE=35) & 1.79 & 0.63 \\
    Adversarial distillation & 2.60 \\
    \midrule
    \textbf{\method}\\
    reverse-KL~(\textcolor{red}{\cmark}, DMD2 \cite{yin2024improved}) & 2.13 & 0.60 \\
    softened RKL~(\textcolor{red}{\cmark}) & 2.21 & 0.60 \\
    squared Hellinger~(\textcolor{orange}{--}) & 1.99 & 0.63   \\
    JS~(\textcolor{orange}{--}) & 2.00 & 0.62\\
    Jeffreys~(\textcolor{green}{\xmark}) & 2.05 &0.62 \\
    forward-KL~(\textcolor{green}{\xmark}) & 1.92 & 0.62 \\
    \bottomrule
    \end{tabular}
    \vspace{-5pt}
    \caption{FID and Recall scores on CIFAR-10. \textcolor{red}{\cmark}/\textcolor{orange}{--}/\textcolor{green}{\xmark} stand for high/medium/low mode-seeking tendency for $f$-divergence. }
    \label{tab:cifar10}
    \vspace{-6pt}
\end{table}

\begin{figure}
    \centering
  \begin{subfigure}[b]{0.234\textwidth}
    \includegraphics[width=\textwidth, trim=0.2cm 0.cm 0.5cm 0.5cm, clip]{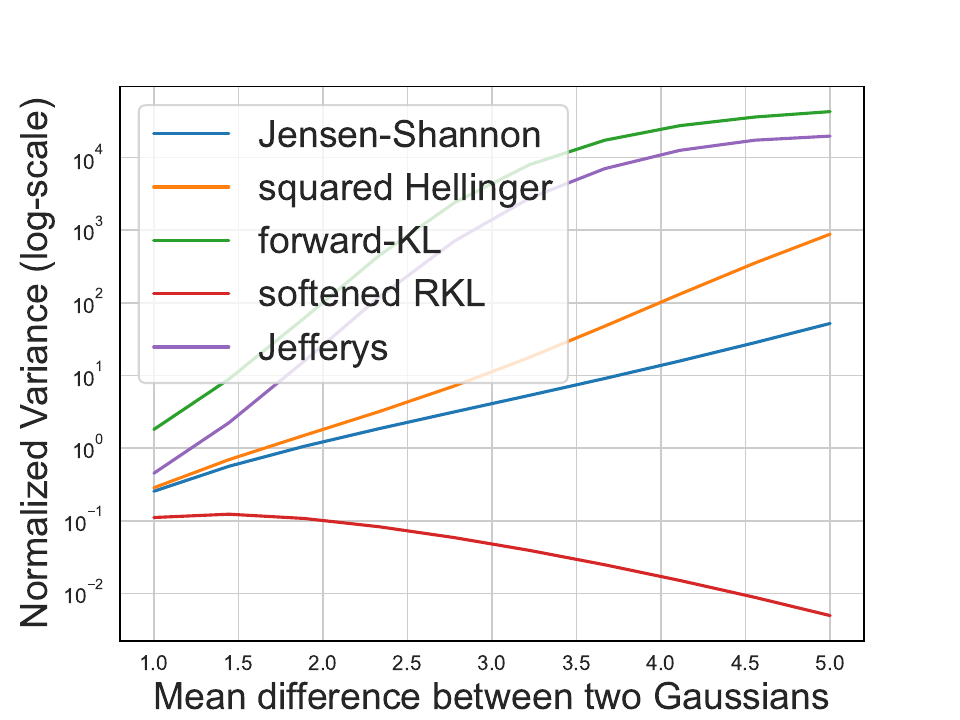}
    \caption{}
     \label{fig:var}
  \end{subfigure}
    \begin{subfigure}[b]{0.234\textwidth}
    \includegraphics[width=\textwidth, trim=0.2cm 0.cm 0.5cm 0.5cm, clip]{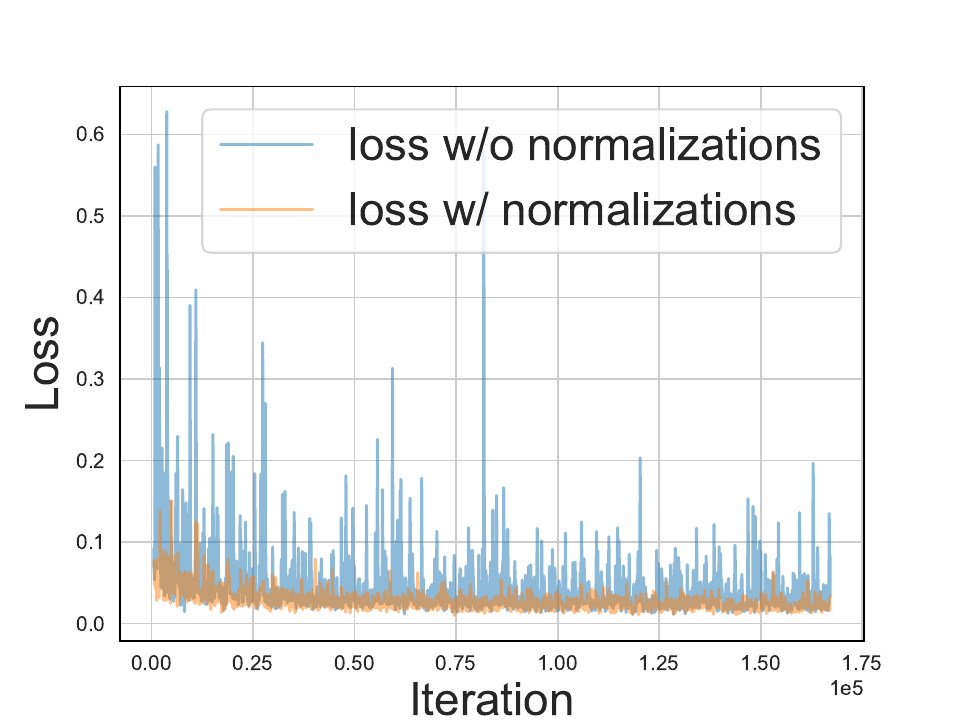}
    \caption{}
      \label{fig:var-loss}
  \end{subfigure}
    \caption{\textbf{(a)} Normalized variance versus the mean difference between two Gaussians. \textbf{(b)} Training losses of forward-KL w/ and w/o normalizations. }
\end{figure}

\begin{table}[t]
\footnotesize
\vspace{-0.5cm}
    \centering
    \begin{tabular}{l c c c}
    \toprule
    & FID $\downarrow$  & Recall $\uparrow$ & NFE\\
    \midrule
    \textbf{Multi-step diffusion models}\\
    EDM~(Teacher)~\cite{Karras2022ElucidatingTD} & 2.35 & 0.68 & 79 \\
    RIN~\cite{Jabri2022ScalableAC} & 1.23 & & 1000\\
    DisCo-Diff~\cite{xu2024disco} & 1.22 & & 623\\
    \midrule
    \textbf{GANs}\\
    BigGAN-deep~\cite{Brock2018LargeSG} & 4.06  & 0.48 & 1 \\
    StyleGAN-XL~\cite{Sauer2022StyleGANXLSS} & 1.52 &  & 1 \\
    \midrule
    \textbf{Diffusion distillation}\\
    DSNO~\cite{zheng2022fast} & 7.83  & 0.61 & 1\\
    iCT-deep~\cite{song2023improved} & 3.25 &0.63 & 1 \\
    Moment Matching~\cite{salimans2024multistep} & 3.00 & & 1 \\
    DMD~\cite{yin2024one} & 2.62 & & 1 \\
    ECM~\cite{geng2024consistency} & 2.49 & &1 \\
    TCM~\cite{lee2024truncated} & 2.20 & &1 \\
    EMD~\cite{xie2024distillation} & 2.20 & 0.59 & 1 \\
    CTM~\cite{kim2024consistency} & 1.92  & 0.57 & 1\\
    Adversarial distillation & 1.88 & & 1\\
    SiD~\cite{zhou2024score} & 1.52  & 0.63 & 1 \\
    GDD-I~\cite{zheng2024diffusion} & 1.16  & 0.60 & 1\\
    & \\[-1.9ex]
    \cdashline{1-4}
    & \\[-1.9ex]
    \textbf{\method}\\
    reverse-KL~(DMD2~\cite{yin2024improved}) & 1.27  & 0.65 & 1\\
    forward-KL~(\textit{ours}) & 1.21 & 0.65  & 1\\
    JS~(\textit{ours}) & \bf{1.16}  & \bf{0.66} & 1\\
    \bottomrule
    \end{tabular}
    \caption{FID score, Recall and NFE on ImageNet-64.}
    \label{tab:imagenet-64}
    \vspace{-15pt}
\end{table}
\vspace{-10pt}
\paragraph{Variance.}
The variance of the weighting function $h$ in the final objective~(\Eqref{eq:obj-final}) is essential to training stability. We use the normalized variance $\Var_q\left(h(p/q)/\E_q[h(p/q)]\right)$ to characterize the variance of different $f$s, ensuring scale-invariant comparison. Fig.~\ref{fig:var} illustrates the normalized variance as a function of the mean difference between two 1D unit-variance Gaussians. The variance of the forward-KL divergence and the Jefferys 
increases significantly as the distance between the Gaussians grows. In contrast, the Jensen-Shannon divergence and the squared Hellinger distance remain relatively stable. This stability contributes to the superior empirical performance of the low-variance Jensen-Shannon divergence in the experimental section.\looseness=-1

To address the high variance often observed in weighting functions for less mode-seeking divergences (see Fig.~\ref{fig:h}), we propose a two-stage normalization scheme. The first stage normalizes the time-dependent density ratio $r_t$, leveraging the fact the expectation of $r_t$ is $1$, \ie $\E_{q_t}[r_t]=1$. We discretize the time range into bins and normalize the values of $r_t$ within each bin by their average. The second stage directly normalizes the weighting function $h$ by its average value within each mini-batch. 
This maintains the relative importance of the \methodtext objective w.r.t the GAN objective, as the scale of weighting function can vary significantly for different $f$s.. Fig.~\ref{fig:var-loss} demonstrates that the loss exhibits a much smaller variance after the above normalization techniques on ImageNet-64 using the forward-KL divergence.  We provide an algorithm box in Alg~\ref{alg:f-distill}. \looseness=-1

\section{Experiment}
\label{sec:experiment}

\subsection{Image generation}

We evaluate \methodtext on CIFAR-10~\citep{krizhevsky2009learning} and ImageNet-64~\citep{deng2009imagenet} for class-conditioned image generation, and on zero-shot MS COCO 2014~\cite{Lin2014MicrosoftCC} for text-to-image generation. We use COYO-700M~\cite{kakaobrain2022coyo-700m} as the training set for text-to-image generation. We use pre-trained models in EDM~\cite{Karras2022ElucidatingTD} as teachers for CIFAR-10 / ImageNet-64, and Stable Diffusion~(SD) v1.5~\cite{rombach2021highresolution} / SDXL~\cite{podell2024sdxl} for text-to-image. As DMD2~\cite{yin2024improved} is a special case~(reverse-KL) under the \methodtext framework, we use the learning rates, CFG guidance weight, update frequency for fake score and discriminator, and the coefficient for GAN loss in generator update in DMD2. 
In the text-to-image experiment, we observe that the estimation of density ratio~(and thus the weighting function $h$) by the discriminator is inaccurate at the early stage of training. To address this, we ``warm up" the discriminator by initializing the model with a pre-trained reverse-KL model, which has a constant $h$. We defer the training details to App~\ref{app:details}. 

Our baseline comparisons include multi-step diffusion models and existing diffusion distillation techniques. We also re-implemented DMD2~\cite{yin2024improved} within our codebase.  Furthermore, to isolate the impact of our proposed \methodtext objective, we conducted an ablation study by removing it and training solely with the GAN objective (denoted as ``Adversarial distillation" in the tables).

\begin{figure*}[t]
    \vspace{-0.7cm}
    \centering
  \begin{subfigure}[b]{0.3\textwidth}
    \includegraphics[width=\textwidth, trim=0.2cm 0.cm 0.cm 0.2cm, clip]{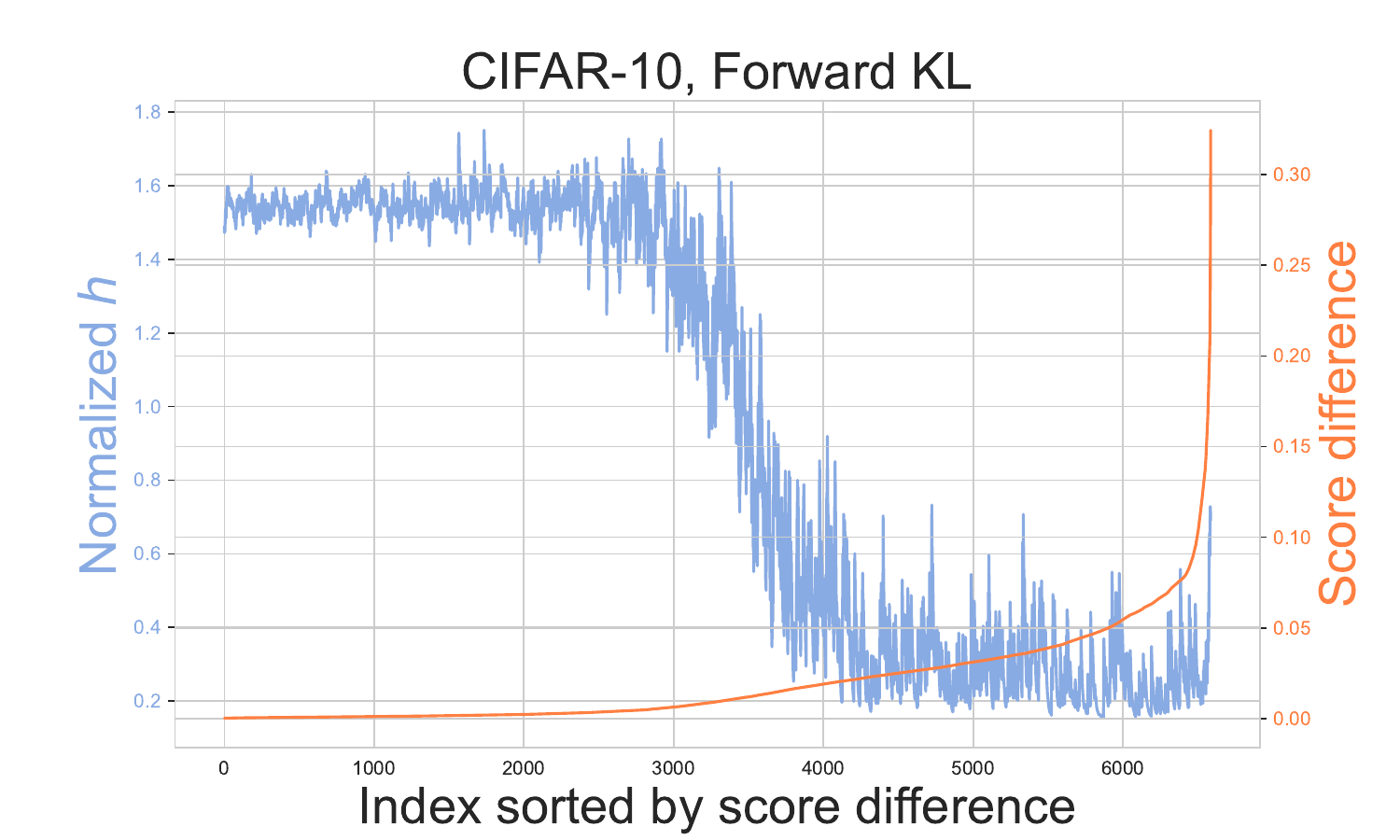}
        \vspace{-12pt}
    \caption{}
  \end{subfigure}
 \hfill
  \begin{subfigure}[b]{0.3\textwidth}
    \includegraphics[width=\textwidth, trim=0.2cm 0.cm 0.cm 0.2cm, clip]{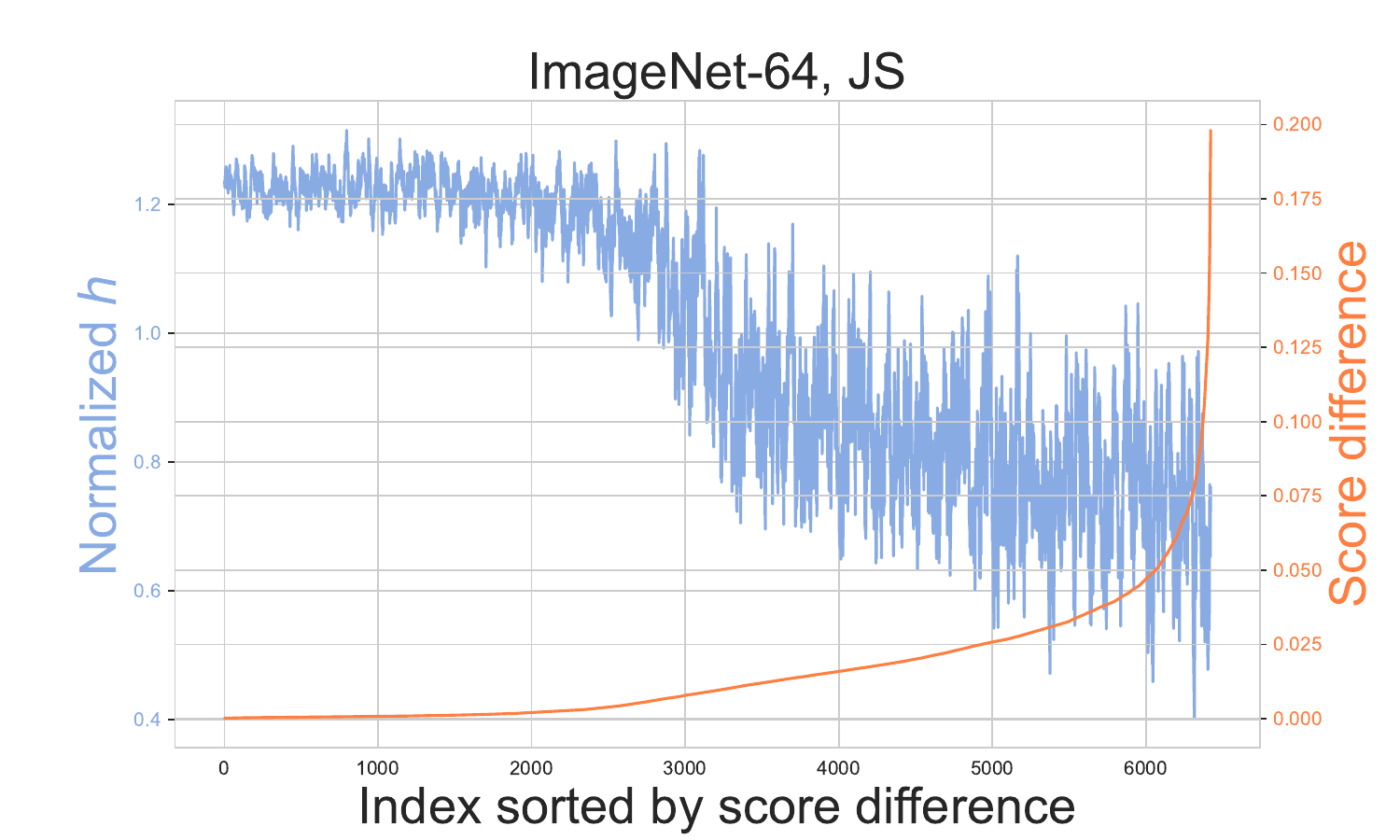}
    \vspace{-12pt}
    \caption{}
  \end{subfigure}\hfill
    \begin{subfigure}[b]{0.3\textwidth}
    \includegraphics[width=\textwidth, trim=0.2cm 0.cm 0.cm 0.2cm, clip]{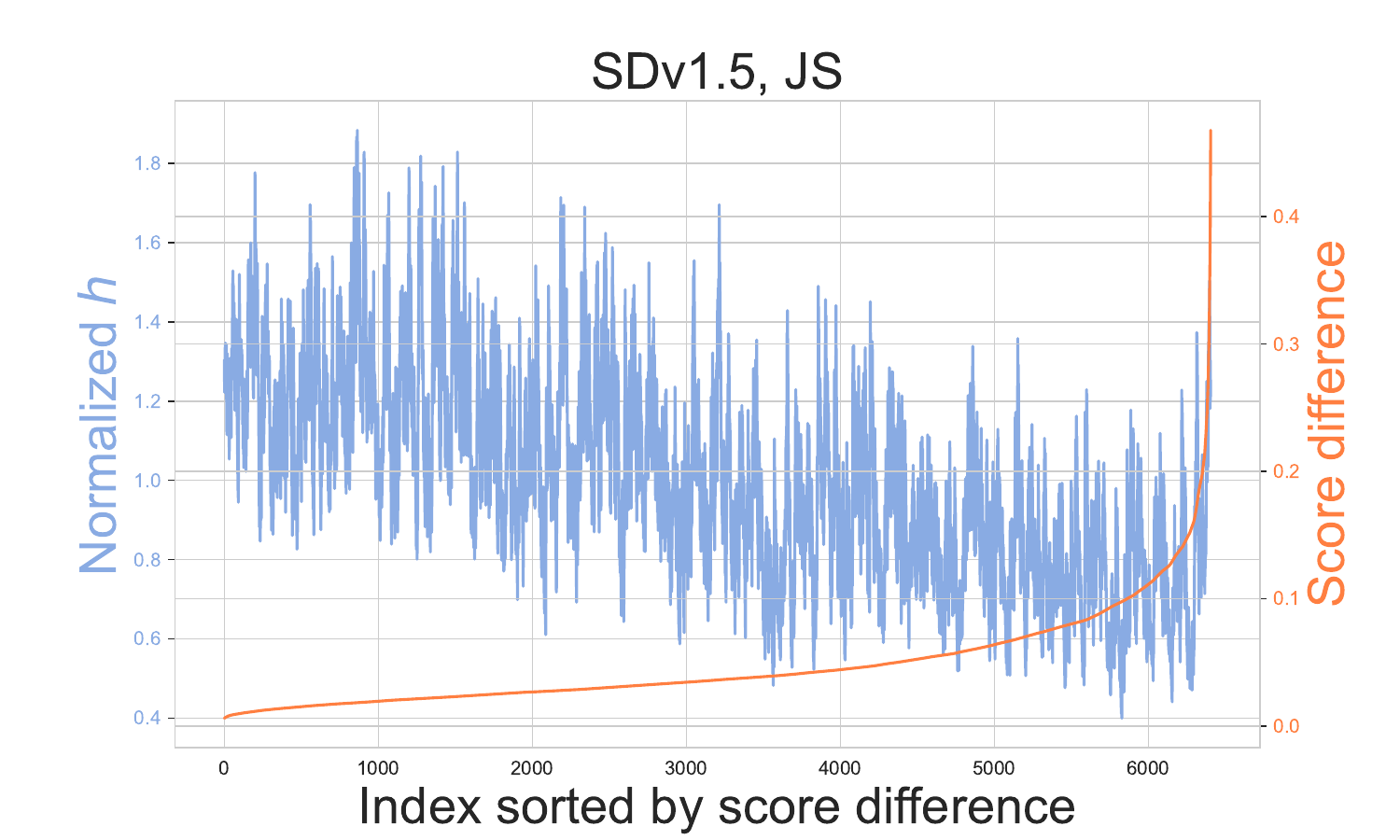}
    \vspace{-12pt}
    \caption{}
  \end{subfigure}
  \vspace{-6pt}
    \caption{Normalized weighting function $h$ and score difference versus the index of 6.4K generated samples, sorted by the score difference, on \textbf{(a)} CIFAR-10 with forward-KL; \textbf{(b)} ImageNet-64 with JS; \textbf{(c)} SDv1.5 with JS.}
    \label{fig:h-scorediff}
    \vspace{-6pt}
\end{figure*}

\begin{table}[t]
\scriptsize
    \centering
    \begin{tabular}{l c c c}
    \toprule
    & FID $\downarrow$ & CLIP score $\uparrow$& Latency $\downarrow$ \\
    \midrule
    \textbf{Multi-step diffusion models}\\
    LDM~\cite{rombach2021highresolution} & 12.63 && 3.7s \\
    DALL·E 2~\cite{ramesh2022dalle2} & 10.39 && 27s \\
Imagen~\cite{Saharia2022PhotorealisticTD} & 7.27 &0.28*& 9.1s \\
    eDiff-I~\cite{balaji2022ediffi} & \bf{6.95} &0.29*& 32.0s \\
    UniPC~\cite{zhao2024unipc} & 19.57 &&  0.26s \\
    Restart~\cite{Xu2023RestartSF} & 13.16 & 0.299 &3.40s\\
    & \\[-1.9ex]
    \cdashline{1-4}
    & \\[-1.9ex]
    \textbf{Teacher}\\
    SDv1.5 (NFE=100, CFG=3, ODE) & 8.59 & 0.308&  2.59s\\
    SDv1.5 (NFE=200, CFG=2, SDE) & 7.21 & 0.301 &10.25s \\
        \midrule
    \textbf{GANs}\\
    StyleGAN-T~\cite{sauer2023stylegan} & 13.90 &0.29*&  0.10s\\
    GigaGAN~\cite{kang2023scaling} & 9.09 &  &0.13s\\
    \midrule
    \textbf{Diffusion distillation}\\
    SwiftBrush~\cite{nguyen2024swiftbrush} & 16.67 &0.29* & 0.09s\\
    SwiftBrush v2~\cite{dao2025swiftbrush} & 8.14 & 0.32* & 0.06s\\
    HiPA~\cite{zhang2023hipa} & 13.91 && 0.09s\\
    InstaFlow-0.9B~\cite{liu2023instaflow} & 13.10 && 0.09s\\
    UFOGen~\cite{xu2024ufogen} & 12.78 && 0.09s\\
    DMD~\cite{yin2024one} & 11.49 && 0.09s \\
    EMD~\cite{xie2024distillation} & 9.66 && 0.09s \\
        & \\[-1.9ex]
    \cdashline{1-4}
    & \\[-1.9ex]
    \textit{CFG=1.75} \\
    reverse-KL (DMD2~\cite{yin2024improved}) & 8.17 & 0.287 & 0.09s\\
    JS~(\textit{ours}) & \bf{7.42} & 0.292 & 0.09s\\
        & \\[-1.9ex]
    \cdashline{1-4}
    & \\[-1.9ex]
    \textit{CFG=5} \\
    reverse-KL (DMD2~\cite{yin2024improved}) & 15.23 & 0.309 & 0.09s\\
    JS~(\textit{ours}) & 14.25 & 0.311 & 0.09s\\
    \bottomrule
    \end{tabular}
    \caption{FID score together with inference latency on text-to-image generation, zero-shot MS COCO-30k 512$\times $ 512. * denotes that the value is taken from the corresponding paper.}
    \label{tab:sd}
    \vspace{-5pt}
\end{table}


    

\vspace{-10pt}
\paragraph{Evaluations.} We measure sample quality with Fr\'{e}chet Inception Distance (FID)~\citep{heusel2017gans}. For diversity, we use the Recall score~\cite{Kynknniemi2019ImprovedPA}. For image-caption alignment, we report the CLIP score. We defer more evaluations to App~\ref{app:diversity} and ~\ref{app:hpsv2} on other metrics, such as in-batch similarity~\cite{corso2023particle} and HPSv2 score~\cite{dao2025swiftbrush} for diversity and image quality assessment.

\vspace{-10pt}
\paragraph{Results.} We first experiment with various $f$-divergences on CIFAR-10, to analyze the relative performance of different $f$s. Table~\ref{tab:cifar10} shows that all the variants under \methodtext outperform the adversarial distillation baseline, validating the effectiveness of \methodtext in addition to GAN objective. In particular, $f$-divergences with milder mode-seeking behavior, such as forward-KL and Jeffreys divergences, generally yield lower FID scores and higher Recall scores.

\begin{figure*}[t]
\centering
\includegraphics[width=0.91\textwidth]{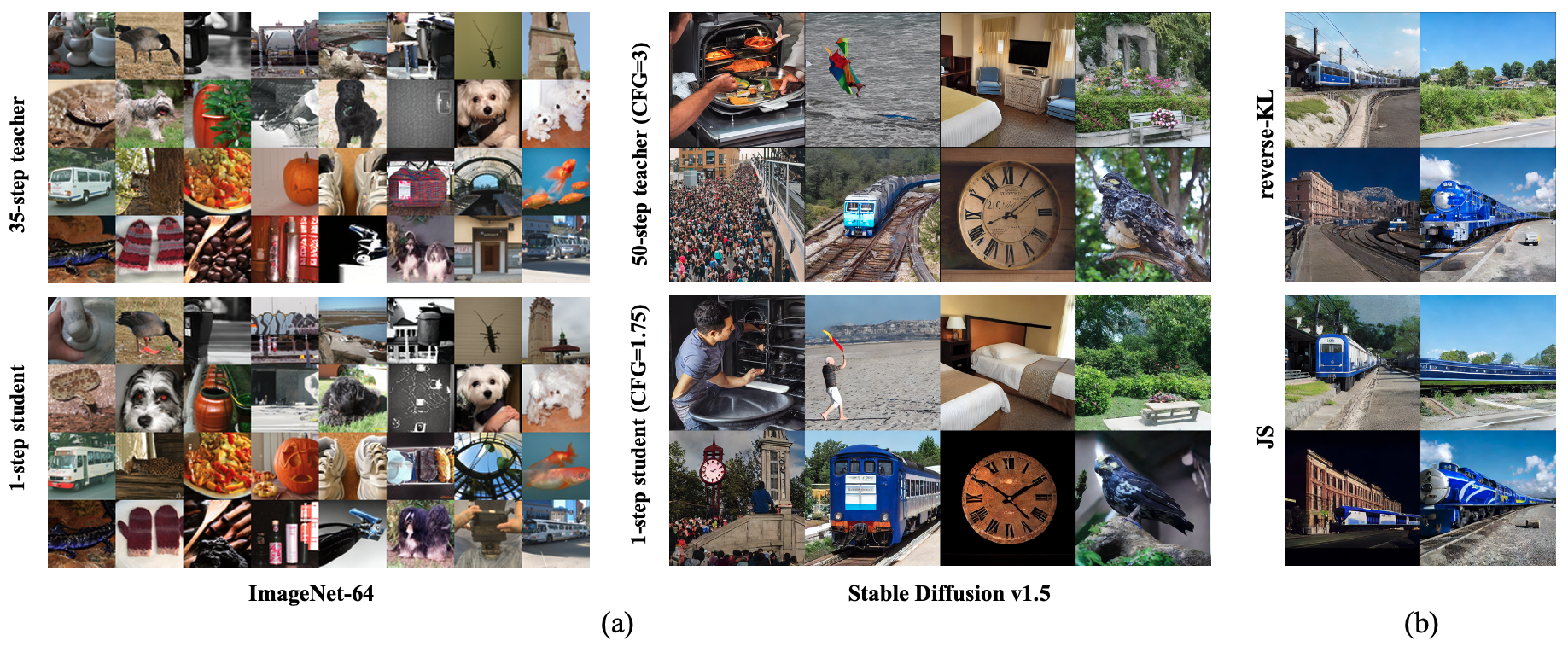}
\vspace{-11pt}
\caption{\textbf{(a)} Uncurated generated samples by the multi-step teacher diffusion models (top), and one-step student in \methodtext (bottom), using the same random seed. The teacher diffusion models use 35 and 50 steps on ImageNet-64 and Stable Diffusion v1.5, respectively. \textbf{(b)} Generated samples by reverse-KL and JS,  using a prompt in COYO: ``\texttt{a blue and white passenger train coming to a stop}".\looseness=-1}
\label{fig:vis}
\vspace{-10pt}
\end{figure*}

Table~\ref{tab:imagenet-64} and Table~\ref{tab:sd} report FID, Recall and CLIP score in two more challenging datasets. We report the inference latency on a single NVIDIA A100 GPU on text-to-image generation, as in \cite{yin2024improved}. Our main findings are: (1) \textbf{\methodtext with JS divergence achieves the current state-of-the-art one-step FID score on both ImageNet-64 and zero-shot MS COCO.} Concretely, JS achieves FID scores of 1.16 on ImageNet-64, outperforming previous best-performing diffusion models, GANs, and distillation methods, except for GDD-I~\cite{zheng2024diffusion}. GDD-I solely applies the GAN objective, which is known to be unstable in large-scale settings. It is reflected in the worse Recall score of GDD-I. Furthermore, JS obtains an FID score of {\bf{7.42}} on MS COCO, when using a CFG=1.75, significantly outperforming previous distillation methods and approaching the performance of leading diffusion models like eDiff-I~\cite{balaji2022ediffi}~(FID of 6.95). (2) \textbf{Forward-KL and JS get better FID scores than reverse-KL}. The two variants with less mode-seeking behavior continue to outperform the reverse-KL~(DMD2~\cite{yin2024improved}). 
When using a higher CFG value 5  in distillation, the JS outperforms most of all the baselines except SwiftBruch v2~\cite{dao2025swiftbrush}, which uses a more advanced SD2.1 as teacher model and additional CLIP loss during training.
(3) \textbf{JS is more stable than forward-KL}. While forward-KL can be less mode-seeking, it suffers from higher variance.  Our experiments on ImageNet-64 confirm that JS exhibits a significantly more stable training process~(App~\ref{app:results}), resulting in a better FID score. We further scale JS to larger models SDXL and observed competitive results to teacher, as shown in Fig.~\ref{fig:vis-main}. \looseness=-1

We further provide a visual comparison of generated samples by teacher and student in Fig.~\ref{fig:vis}. We observe that the generated images by one-step students generally have richer details and more aligned with the text prompts. We provide detailed prompts and extended samples in App~\ref{app:extended} in the supplementary material.

\subsection{Behavior of weighting function $h$}

As discussed in Section~\ref{sec:properties}, $f$-divergence with less mode-seeking tendancy has a faster increasing second derivative $f''$, resulting in an increasing weighting function $h(r)=f''(r)r^2$. This means that generated samples in the low-density regions of the data distribution $p$ will be down-weighted accordingly, and the teacher models are prone to inaccurate score estimation in these regions~\cite{karras2024guiding}.

To further understand the behavior of $h$, we study its relation with the score difference between a teacher and fake score, \ie $||s_\phi(\rvx; \sigma(t)) - s_\psi(\rvx, \sigma(t))||_2$ on real datasets. Recall that the teacher $s_\phi$ approximates the true score, \ie $s_\phi(\rvx; \sigma(t)) \approx \nabla_\rvx \log p_t(\rvx)$, and the online fake score approximate the generated distribution, \ie $s_\psi (\rvx, \sigma(t)) \approx \nabla_\rvx \log q_t(\rvx)$. We compute both $h$ and the score difference for 6.4K generated samples and sort them in ascending order of their score difference. Fig.~\ref{fig:h-scorediff} shows that the sample's weighting $h$ generally goes in the opposite direction with its score difference when using less mode-seeking divergences. This observation suggests that when using non-mode-seeking divergences~(\textit{e.g.,} forward-KL, JS), \methodtext effectively downweights samples in regions where the teacher and fake scores exhibit substantial discrepancies, which typically correspond to low-density regions of the true data distribution.

\section{Related work}
\vspace{-3pt}

As the sampling process in diffusion models is essentially solving the ODEs or SDEs~\citep{song2020score}, many early works focus on reducing the sampling steps with faster numerical solvers~\citep{Song2020DenoisingDI,lu2022dpmp,Karras2022ElucidatingTD,liu2022pseudo,zheng2024dpm}. However, they usually still require more than 20 steps due to the discretization error. Diffusion distillation has recently attracted more attention due to its promising goal of reducing the number of sampling steps to one single network call. 
It mainly includes two classes of distillation approaches: 

(1) \emph{Trajectory distillation}, which trains a one-step student model to mimic the deterministic sampling process of the teacher diffusion model. Knowledge distillation~\citep{luhman2021knowledge,zheng2022fast} learns a direct mapping from noise to data. Progressive distillation~\cite{salimans2022progressive,meng2023distillation} iteratively halves the number of sampling steps via distillation. Consistency models~\citep{song2023consistency,song2023improved,geng2024consistency,lee2024truncated,lu2024simplifying} lean a consistency function that maps any noisy data along an ODE trajectory to the associated clean data. 

(2) \emph{Distribution matching}, which aligns the distribution of the one-step student with that of the teacher diffusion model. 
Adversarial distillation~\citep{sauer2023adversarial,sauer2024fast,xu2024ufogen} mainly relies on the adversarial training~\citep{goodfellow2014generative} to learn teacher output's distribution. Another line of approaches implicitly minimizes various divergences, often via variational score distillation~\citep{wang2024prolificdreamer}, such as reverse-KL~\citep{yin2024one,yin2024improved}, forward KL~\citep{luo2024diff,xie2024distillation} and fisher divergence~\citep{zhou2024score}. Score Implicit Matching~\citep{luo2024one} generalizes the fisher divergence~\citep{zhou2024score} by relaxing the score-based distance to have more general forms beyond squared $L_2$. Our method lies in this category. Different from previous methods that only minimize a particular distribution divergence, each of which may require vastly different training strategies~\citep{yin2024improved,xie2024distillation,zhou2024score}, our method unifies the class of $f$-divergences in a principled way and thus offers better flexibility in distribution matching distillation.

\vspace{-2pt}
\section{Conclusions}
\vspace{-2pt}
We have proposed \method, a novel and general framework for distributional matching distillation based on $f$-divergence minimization. We derive a gradient update rule comprising the product of a weighting function and the score difference between the teacher and student distributions. \methodtext encompasses previous variational score distillation objectives while allowing less mode-seeking divergences. By leveraging the weighting function, \methodtext naturally downweights regions with larger score estimation errors.  Experiments on various image generation tasks demonstrate the strong one-step generation capabilities of \method.

{
    \small
    \bibliographystyle{ieeenat_fullname}
    \bibliography{main}
}

\clearpage

\onecolumn
{

    \centering
    \Large
    \textbf{One-step Diffusion Models with $f$-Divergence Distribution Matching}\\
    \vspace{0.5em}Supplementary Material \\
    \vspace{1.0em}
}

\appendix

\section{Proofs}
\label{app:proof}

In this section, we provide proofs for Theorem~\ref{thm-main} and Proposition~\ref{prop:general} in the main text. We will start with Lemma~\ref{lemma:cov} before proving Theorem~\ref{thm-main}.

\setcounter{theorem}{0}
\setcounter{proposition}{1}

\begin{lemma}
\label{lemma:cov}
Assuming that sampling from $\rvx \sim  q_t(\rvx)$ can be parameterized to $\rvx = G_\theta(\rvz)+\sigma(t)\epsilon$ for $\rvz \sim  p(\rvz), \epsilon \sim \mathcal{N}(  \mathbf{0}, \mI)$ and $G_\theta,g $ are differentiable mappings. In addition, $g$ is constant with respect to $\theta$. Then $\int \nabla_\theta  q_t(\rvx) g(\rvx) d\rvx = \int\int p(\epsilon)  p(\rvz) \nabla_\rvx g(\rvx) \nabla_\theta G_\theta(\rvz) d\epsilon d\rvz$.
\end{lemma}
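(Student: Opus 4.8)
The plan is to recognize the claimed identity as the equivalence between the two standard ways of computing $\nabla_\theta \E_{\rvx \sim q_t}[g(\rvx)]$: the score-function (density) route, which produces the left-hand side, and the reparameterization (pathwise) route, which produces the right-hand side. Both compute the same scalar gradient, so equating them yields the lemma. The whole proof is therefore a ``compute one quantity two ways'' argument, with no genuinely hard step beyond justifying differentiation under the integral sign.

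First I would introduce the auxiliary functional $J(\theta) := \E_{\rvx \sim q_t}[g(\rvx)] = \int q_t(\rvx) g(\rvx)\, d\rvx$ and evaluate $\nabla_\theta J$ along the density route. Since $g$ is constant with respect to $\theta$ by hypothesis, differentiating under the integral sign gives $\nabla_\theta J = \int \nabla_\theta q_t(\rvx)\, g(\rvx)\, d\rvx$, which is exactly the integral appearing on the left-hand side of the lemma.

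Next I would evaluate the same $\nabla_\theta J$ along the reparameterization route. Using the stated hypothesis that $\rvx = G_\theta(\rvz) + \sigma(t)\epsilon$ with $\rvz \sim p(\rvz)$ and $\epsilon \sim \mathcal{N}(\mathbf{0}, \mI)$ has law $q_t$, I rewrite $J(\theta) = \int\!\!\int p(\rvz) p(\epsilon)\, g\big(G_\theta(\rvz) + \sigma(t)\epsilon\big)\, d\epsilon\, d\rvz$. Here the base measures $p(\rvz)$ and $p(\epsilon)$ carry no $\theta$-dependence, so all the $\theta$-dependence sits inside $g \circ (G_\theta(\rvz)+\sigma(t)\epsilon)$. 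Differentiating under the integral and applying the chain rule $\nabla_\theta\, g\big(G_\theta(\rvz)+\sigma(t)\epsilon\big) = \nabla_\rvx g(\rvx)\, \nabla_\theta G_\theta(\rvz)$, evaluated at $\rvx = G_\theta(\rvz)+\sigma(t)\epsilon$, produces precisely the double integral on the right-hand side. Equating the two expressions for $\nabla_\theta J$ concludes the proof.

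The main obstacle is purely the analytic justification of interchanging $\nabla_\theta$ with the integral in both representations; once that interchange is granted, the rest is bookkeeping. Rigorously this requires mild regularity — integrability of $g(\rvx)\nabla_\theta q_t(\rvx)$ and of $\nabla_\rvx g(\rvx)\,\nabla_\theta G_\theta(\rvz)$, together with a $\theta$-uniform dominating function so that dominated convergence applies. Because $q_t$ is a Gaussian-smoothed density (a convolution with $\mathcal{N}(\mathbf{0},\sigma^2(t)\mI)$ for $t>0$), its derivatives are well behaved and these conditions hold under the differentiability assumptions already in force; I would simply state them as the operating hypotheses rather than work through the measure-theoretic estimates in detail.
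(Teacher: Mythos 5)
Your proposal is correct and takes essentially the same route as the paper's proof: both compute $\nabla_\theta \int q_t(\rvx) g(\rvx)\, d\rvx$ twice, once via the density representation (yielding the left-hand side) and once via the reparameterization $\rvx = G_\theta(\rvz)+\sigma(t)\epsilon$ with the chain rule (yielding the right-hand side), with the interchange of differentiation and integration as the only analytic step. Your explicit remark about the regularity conditions needed for that interchange is in fact slightly more careful than the paper, which just cites continuity of $q_t$ and differentiability of $g$.
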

\begin{proof}
    As $q_t$ and $g$ are both continuous functions, we can interchange integration and differentiation:
    \begin{align*}
        \int \nabla_\theta  q_t(\rvx) g(\rvx) d\rvx  &= \nabla_\theta \int   q_t(\rvx) g(\rvx) d\rvx \\
        &= \nabla_\theta \int  \int p(\epsilon)  p(\rvz)  g(G_\theta(\rvz)+\sigma(t)\epsilon)  d\epsilon d\rvz \\
        &=  \int  \int p(\epsilon)  p(\rvz)  \nabla_\theta g(G_\theta(\rvz)+\sigma(t)\epsilon)  d\epsilon d\rvz  \numberthis\label{eq:lemma1-inter}\\
        &=  \int  \int p(\epsilon)  p(\rvz)  \nabla_{\rvx} g(G_\theta(\rvz)+\sigma(t)\epsilon) \nabla_\theta G_\theta(\rvz)  d\epsilon d\rvz \\
        &= \int\int p(\epsilon)  p(\rvz) \nabla_\rvx g(\rvx) \nabla_\theta G_\theta(\rvz) d\epsilon d\rvz
    \end{align*}
where $\rvx = G_\theta(\rvz)+\sigma(t)\epsilon$. We can interchange integration and differentiation again in \Eqref{eq:lemma1-inter} as $g$ is a differentiable function.
\end{proof}
\begin{theorem}

Let $p$ be the teacher's generative distribution, and let $q$ be a distribution induced by transforming a prior distribution $ p(\rvz)$ through the differentiable mapping $G_\theta$. Assuming $f$ is twice continuously differentiable, then the gradient of $f$-divergence between the two intermediate distribution $p_t$ and $q_t$ w.r.t $\theta$ is:
{
\begin{align*} 
    &\nabla_\theta D_f(p_t||q_t) = \E_{\substack{\rvz,  \epsilon}}-\left[f''\left(\frac{p_t(\rvx)}{q_t(\rvx)}\right)\left(\frac{p_t(\rvx)}{q_t(\rvx)}\right)^2\left(\underbrace{\nabla_\rvx \log p_t(\rvx)}_{\textrm{teacher score}} - \underbrace{\nabla_\rvx \log q_t(\rvx)}_{\textrm{fake score}} \right)  \nabla_\theta G_\theta(\rvz)\right]
    \numberthis \label{eq:time-0-loss-2}
\end{align*}
}%
where $\rvz \sim  p(\rvz), \epsilon \sim \mathcal{N}(  \mathbf{0}, \mI)$ and $ \rvx = G_\theta(\rvz)+\sigma(t)\epsilon $
\end{theorem}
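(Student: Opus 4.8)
The plan is to begin from the definition $D_f(p_t\|q_t) = \int q_t(\rvx)\, f\!\left(p_t(\rvx)/q_t(\rvx)\right) d\rvx$ and differentiate under the integral sign in $\theta$, using that only $q_t$ carries $\theta$-dependence since the teacher $p_t$ is fixed. Writing $r_t = p_t/q_t$ and applying the product rule to the integrand $q_t f(r_t)$, I would substitute $\nabla_\theta r_t = -\frac{p_t}{q_t^2}\nabla_\theta q_t$ to collect everything into a single factor of $\nabla_\theta q_t$, so the integrand becomes $\nabla_\theta q_t \cdot \left[f(r_t) - r_t f'(r_t)\right]$. Splitting this into $I = \int \nabla_\theta q_t\, f(r_t)\,d\rvx$ and $II = \int \nabla_\theta q_t\, r_t f'(r_t)\,d\rvx$ reproduces the decomposition $\nabla_\theta D_f = I - II$ used in the main-text sketch.

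The key move is to observe that, evaluated at the current parameter $\theta$, each of $f(r_t)$ and $r_t f'(r_t)$ is a fixed function of $\rvx$, so both $I$ and $II$ take exactly the form $\int \nabla_\theta q_t(\rvx)\, g(\rvx)\,d\rvx$ demanded by Lemma~\ref{lemma:cov}. Invoking that lemma with $g = f(r_t)$ and with $g = r_t f'(r_t)$ respectively rewrites each integral as an expectation over $\rvz\sim p(\rvz)$, $\epsilon\sim\gN(\mathbf{0},\mI)$ of the spatial gradient $\nabla_\rvx g$ pushed through the chain rule via $\nabla_\theta G_\theta(\rvz)$, where $\rvx = G_\theta(\rvz)+\sigma(t)\epsilon$.

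Next I would evaluate the two spatial gradients: $\nabla_\rvx f(r_t) = f'(r_t)\nabla_\rvx r_t$, while the product rule gives $\nabla_\rvx\!\left[r_t f'(r_t)\right] = \left[f'(r_t) + r_t f''(r_t)\right]\nabla_\rvx r_t$. Forming $I - II$ cancels the common $f'(r_t)\nabla_\rvx r_t$ contribution and leaves $-f''(r_t)\, r_t\, \nabla_\rvx r_t$. Finally the log-derivative identity $\nabla_\rvx r_t = r_t\,\nabla_\rvx \log r_t = r_t\left(\nabla_\rvx\log p_t - \nabla_\rvx\log q_t\right)$ supplies the extra factor of $r_t$, yielding the weighting $f''(r_t)\, r_t^2\left(\nabla_\rvx\log p_t - \nabla_\rvx\log q_t\right)$ and completing the identification with \Eqref{eq:time-0-loss-2}.

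The step I expect to be the main obstacle is justifying the stop-gradient cleanly: after factoring out $\nabla_\theta q_t$, the remaining functions $f(r_t)$ and $r_t f'(r_t)$ still contain $q_t$ and so appear to depend on $\theta$, whereas Lemma~\ref{lemma:cov} requires a $g$ constant in $\theta$. The resolution is that the full $\theta$-derivative has already been extracted by the product rule, so at the evaluation point these are genuinely fixed functions of $\rvx$ and the lemma applies verbatim; I would state this explicitly to avoid double-counting the $\theta$-dependence. Secondary care is needed to justify differentiating under the integral sign, both in moving $\nabla_\theta$ inside $D_f$ and inside the proof of Lemma~\ref{lemma:cov}, which rests on dominated-convergence-type regularity of $q_t$, $f$, and $G_\theta$; the Gaussian smoothing by $\sigma(t)$ for $t>0$ guarantees $q_t$ is smooth and strictly positive, so $r_t$ and $\nabla_\rvx\log q_t$ are well defined and the interchanges are valid.
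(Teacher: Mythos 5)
Your proposal is correct and takes essentially the same route as the paper's proof: the identical product-rule decomposition $\nabla_\theta D_f = I - II$ with $I=\int \nabla_\theta q_t\, f(r_t)\,d\rvx$ and $II=\int \nabla_\theta q_t\, r_t f'(r_t)\,d\rvx$, the same invocation of Lemma~\ref{lemma:cov} with $g=f(r_t)$ and $g=r_t f'(r_t)$, the same cancellation of the $f'(r_t)\nabla_\rvx r_t$ terms, and the same final log-derivative step producing the $f''(r_t)\,r_t^2$ weighting. Your explicit handling of the stop-gradient subtlety (treating $f(r_t)$ and $r_t f'(r_t)$ as frozen functions of $\rvx$ once the $\theta$-derivative has been extracted) is exactly the point the paper makes via its notational convention that only the leading $q_t$ in each integral carries the gradient with respect to $\theta$.
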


\begin{proof}

Note that both the intermediate student distribution $q_t$ and the sample $\rvx$ have a dependency on the generator parameter $\theta$. In the proof, we simplify the expression $\int (\nabla_\theta  q_t(\rvx)) g(\rvx) d\rvx$ as $\int \nabla_\theta  q_t(\rvx) g(\rvx) d\rvx$ for clarity. The total derivative of $f$-divergence between teacher's and student's intermediate distribution is as follows:

\begin{align*}
&\nabla_\theta D_f(p_t(\rvx)||q_t(\rvx)) =   \nabla_\theta \int q_t(\rvx) f(\frac{p_t(\rvx)}{q_t(\rvx)}) d\rvx \\
&=   \int \nabla_\theta q_t(\rvx) f(\frac{p_t(\rvx)}{q_t(\rvx)}) d\rvx + \int  q_t(\rvx) \nabla_\theta f(\frac{p_t(\rvx)}{q_t(\rvx)}) d\rvx \\
&=  \int \nabla_\theta q_t(\rvx) f(\frac{p_t(\rvx)}{q_t(\rvx)}) d\rvx - \int  q_t(\rvx)  f'(\frac{p_t(\rvx)}{q_t(\rvx)})\frac{p_t(\rvx)}{q^2_t(\rvx)} \nabla_\theta q_t(\rvx)d\rvx\\
    &=  \underbrace{\int \nabla_\theta q_t(\rvx) f(\frac{p_t(\rvx)}{q_t(\rvx)}) d\rvx }_{I} - \underbrace{\int \nabla_\theta q_t(\rvx) f'(\frac{p_t(\rvx)}{q_t(\rvx)}) \frac{p_t(\rvx)}{q_t(\rvx)}  d\rvx}_{II} \numberthis \label{eq:combined}
\end{align*}

Note that by notation $\nabla_\theta  q_t(\rvx)$, we mean that only the first $q$ inisde each integral has gradient w.r.t $\theta$. (I) / (II) is the term associated with the partial derivative of $f$ with respect to $\rvx$ / $q$, respectively. Above we see that both partial derivatives (I) and (II) are in the form $\int \nabla_\theta  q_t(\rvx) g(\rvx) d\rvx$ where $g$ is a differentiable function that is constant with respect to $\theta$. Using the identity in Lemma~\ref{lemma:cov} again, we can simplify (I) and (II) to:
{
\begin{align*}
I &= \int \int p(\epsilon)  p(\rvz) f'(\frac{ p_t(\rvx)}{ q_t(\rvx)}) \nabla_\rvx \frac{ p_t(\rvx)}{ q_t(\rvx)} \nabla_\theta G_\theta(\rvz) d\epsilon d\rvz \\
II &= \int \int p(\epsilon) p(\rvz) f''(\frac{ p_t(\rvx)}{ q_t(\rvx)}) \frac{ p_t(\rvx)}{ q_t(\rvx)} \nabla_\rvx \frac{ p_t(\rvx)}{ q_t(\rvx)} \nabla_\theta G_\theta(\rvz)d\epsilon d\rvz + \int \int p(\epsilon) p(\rvz) f'(\frac{ p_t(\rvx)}{ q_t(\rvx)}) \nabla_\rvx \frac{ p_t(\rvx)}{ q_t(\rvx)} \nabla_\theta G_\theta(\rvz) d\epsilon d\rvz 
\end{align*}
}%
Putting (I) and (II) in \Eqref{eq:combined}, we have:
{
\begin{align*}
    \nabla_\theta D_f(p_t(\rvx)||q_t(\rvx)) = &-\int  \int p(\epsilon) p(\rvz) f''(\frac{ p_t(\rvx)}{ q_t(\rvx)}) \frac{ p_t(\rvx)}{ q_t(\rvx)} \nabla_\rvx \frac{ p_t(\rvx)}{ q_t(\rvx)} \nabla_\theta G_\theta(\rvz) d\epsilon d\rvz \\
= &-\int  \int p(\epsilon) p(\rvz) f''\left(\frac{ p_t(\rvx)}{ q_t(\rvx)}\right)\left(\frac{ p_t(\rvx)}{ q_t(\rvx)}\right)^2\left[\nabla_\rvx \log  p_t(\rvx) - \nabla_\rvx \log  q_t(\rvx)\right] \nabla_\theta G_\theta(\rvz) d\epsilon d\rvz \\
= &\E_{\substack{\rvz,  \epsilon}}-\left[f''\left(\frac{p_t(\rvx)}{q_t(\rvx)}\right)\left(\frac{p_t(\rvx)}{q_t(\rvx)}\right)^2\left(\underbrace{\nabla_\rvx \log p_t(\rvx)}_{\textrm{teacher score}} - \underbrace{\nabla_\rvx \log q_t(\rvx)}_{\textrm{fake score}} \right)  \nabla_\theta G_\theta(\rvz)\right]
\end{align*}
}%
where the last identity is from the log derivative trick, \ie $\nabla_\rvx \frac{p_t(\rvx)}{q_t(\rvx)} = \frac{p_t(\rvx)}{q_t(\rvx)}\left[\nabla_\rvx \log  p_t(\rvx) - \nabla_\rvx \log  q_t(\rvx)\right] $. 

\end{proof}

\mainprop*

\begin{proof}
    To constitute a valid $f$-divergence, the requirement for $f $ is that  $f$ is a convex function on $(0,+\infty)$ satisfying $f(1)=0$. For any function $h$ that is continuous and non-negative function on $(0, +\infty)$, the function $g(r) = h(r)/r^2$ is also a continuous and non-negative function on $(0, +\infty)$. By the fundamental theorem of calculus, we know that there exists a continuous function $m(r)$ whose second derivative equals to $g(r)$, \ie $m''(r) = g(r)$. Let $f(r) = m(r) - m(1)$, it is straightforward to see that $f(1)=0$ and $f''(r)=h(r)/r^2$. In addition, $f$ is a convex function on $(0, +\infty)$ as its second derivative is non-negative in this domain. Let $ \rvx = G_\theta(\rvz)+\sigma(t)\epsilon $, we can re-express the expectation as follows:
    \begin{align*}
        &\E_{\substack{\rvz,  \epsilon}}-\left[h\left(r_t(\rvx)\right)\right.\left({\nabla_\rvx \log p_t(\rvx)} - {\nabla_\rvx \log q_t(\rvx)} \right)  \nabla_\theta G_\theta(\rvz)] \\
        &= \E_{\substack{\rvz,  \epsilon}}-\left[f''\left(r_t(\rvx)\right)r^2_t(\rvx)\right.\left({\nabla_\rvx \log p_t(\rvx)} - {\nabla_\rvx \log q_t(\rvx)} \right)  \nabla_\theta G_\theta(\rvz)] \\
        &= \nabla_\theta D_f(p_t(\rvx)||q_t(\rvx)) 
    \end{align*}
    where the last equation is by Theorem~\ref{thm-main}.
\end{proof}

\section{Training details}

\label{app:details}

In this section, we provide training details for \methodtext on CIFAR-10, ImageNet-64, COYO-700M~(w/ SD v1.5 model) and COYO-700M-Aesthetic-6.5~(w/ SDXL model). The COYO-700M-Aesthetic-6.5 is a subset of COYO-700M by filtering out images with aesthetic score lower than 6.5. Table~\ref{tab:training} shows the values of common training hyper-parameters on different datasets. For most hyper-parameters, we directly borrow the value from \cite{yin2024improved}, which is a special case in the \methodtext framework. Inspired by the three-stage training in  \cite{yin2024improved}, we also divide the ImageNet-64 training process into two stages with different learning rates. In the first stage, we train the model with a learning rate of 2e-6 for 200k iterations, then fine-tune it with a learning rate of 5e-7 for 180k iterations. We apply TTUR~\cite{yin2024improved} for all the models. We further provide an algorithm box in Alg~\ref{alg:f-distill} for clarity. For hyper-parameters, we use a batch size of 2048 / 512 / 1024 / 384 for CIFAR-10 / ImageNet-64 / COYO (SD v1.5) / COYO (SDXL).

\cite{yin2024improved} uses the online fake score network as the feature extractor for the GAN discriminator. This complicates the training process, as there is an additional hyper-parameter balancing the denoising score-matching loss and GAN loss for updating the fake score network. To simplify the use of GAN in our framework, we use the fixed teacher network as the feature extractor, similar to LADD~\cite{sauer2024fast}. Unlike \cite{yin2024improved}, including the fake score network as part of the learnable parameter in the GAN discriminator, the learnable parameter in the new setup is a small classification head whose input is the feature from the teacher network. We empirically observe that the modification leads to better performance on CIFAR-10, as shown in Fig.~\ref{fig:feature-a}. We also include an ablation study~(green line in Fig.~\ref{fig:feature-b}), which uses the fake score as a feature extractor but does not update it in the GAN loss. The model behaves poorly in this case, as the fake score is constantly getting updated with denoising score-matching loss, validating the benefits of using the fixed teacher score as feature extractor.

For evaluation, we report the FID / Recall score on 50K samples on CIFAR-10 and ImageNet-64, and 30K samples on zero-shot MS-COCO dataset. We report the CLIP score on 30K samples using MS-COCO datasets.

\begin{table}[t]
\footnotesize
    \centering
    \begin{tabular}{l c c c c}
    \toprule
     & CIFAR-10 & ImageNet-64 & COYO-700M & COYO-700M-Aesthetic-6.5  \\
    \midrule
    Batch size & 2048 & 512 & 1024 & 384 \\
    Fake score update frequency & 5 & 5 &  10 & 5\\
    GAN loss weight & 1e-3 & 3e-3 & 1e-3& 5e-3\\
    GAN discriminator input resolution & (32, 16, 8) & (8) & (8) & (16)\\
    Total iteration & 60K & 380K & 60K & 25K\\
    Teacher & EDM~\cite{Karras2022ElucidatingTD} & EDM~\cite{Karras2022ElucidatingTD} & Stable Diffusion v1.5~\cite{rombach2022high} & SDXL~\cite{podell2024sdxl}\\
    CFG weight & 1 & 1 & 1.75/5 & 8 \\
        & \\[-1.9ex]
    \cdashline{1-5}
    & \\[-1.9ex]
    \textbf{Adam optimizer} \\
    Learning rate & 1e-4 & 2e-6 &  1e-5&  5e-7\\
    Learning rate for fine-tuning & - & 5e-7 & - & - \\
    Weight decay & 1e-2 & 1e-2 & 1e-2 & 1e-2  \\
    $\gamma$ in R1 regularization & 1 & 0 & 1 & 1\\
    \bottomrule
    \end{tabular}
    \caption{Training configuration for \methodtext on different datasets.}
    \label{tab:training}
\end{table}

\begin{figure*}[t]
    \centering
  \begin{subfigure}[b]{0.35\textwidth}
    \includegraphics[width=\textwidth]{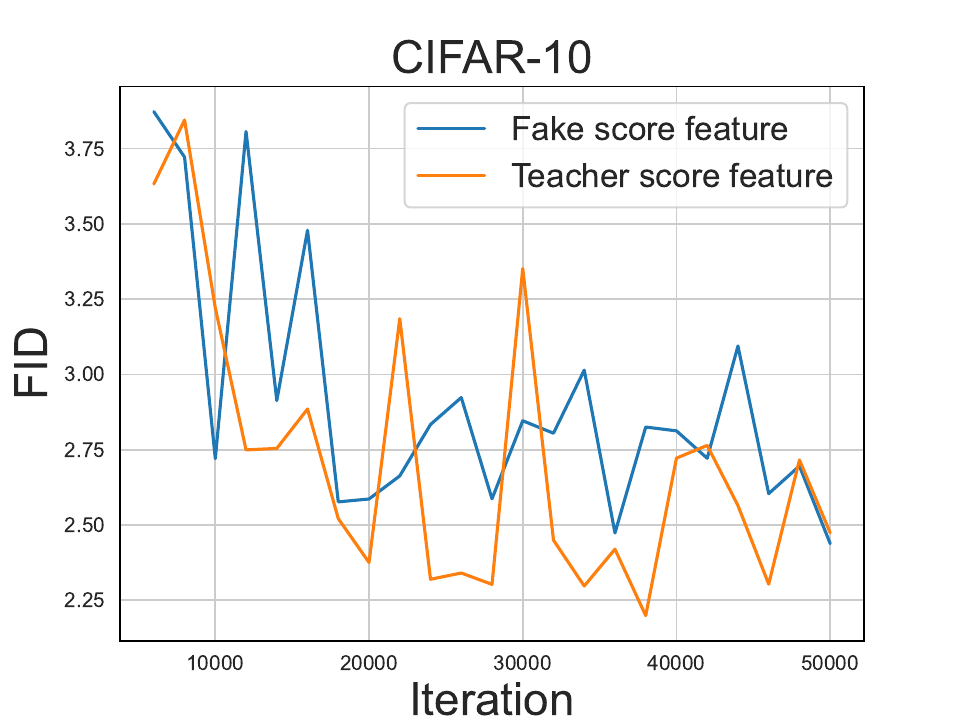}
        \vspace{-3pt}
    \caption{}
        \label{fig:feature-a}
  \end{subfigure}
 \hspace{3pt}
  \begin{subfigure}[b]{0.35\textwidth}
    \includegraphics[width=\textwidth]{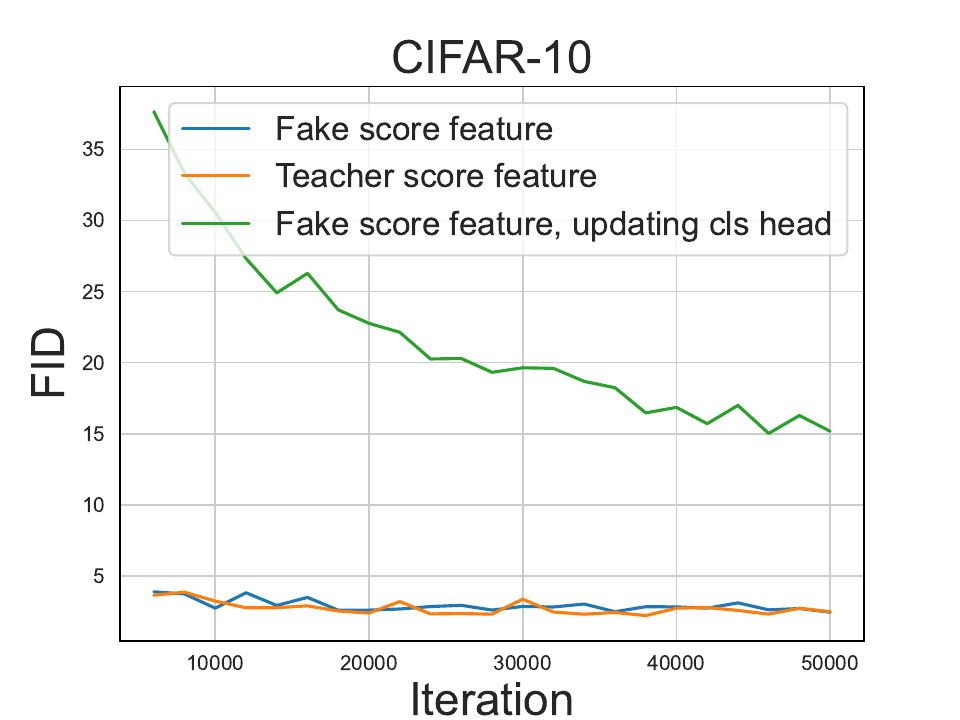}
    \vspace{-3pt}
    \caption{}
    \label{fig:feature-b}
  \end{subfigure}
    \caption{FID score versus training iteration on CIFAR-10. \textit{Fake score feature}: fake score as the extractor, updating both the fake score and classification head in the GAN discriminator loss. \textit{Teacher score feature}: teacher score as the extractor, updating classification head in the GAN discriminator loss. \textit{Teacher score feature, updating cls head}: fake score as the extractor, updating classification head in the GAN discriminator loss. (a) is the zoomed-in visualization of (b).
    }
    \label{fig:feature}
\end{figure*}

\begin{algorithm}[t]
\small
    \caption{$f$-distill Training}
    \label{alg:f-distill}
    \begin{algorithmic}[1]
        \State \textbf{Input:} Teacher diffusion model $s_\phi$, fake score $s_\psi$, one-step student $G_\theta$, discriminator $D_\lambda$, total iteration $M$, batch size $B$, fake score update frequency $\tau$, weighting function $h$, GAN loss coefficient $w_{\textrm{GAN}}$, minimun/maximum value for ratio $r_{\textrm{min}}$/$r_{\textrm{max}}$.
        \For{iteration in $0 \dots M$}
                    \State $\rvx_1,...,\rvx_B \sim p_{\text{data}}$, $\epsilon_1,...,\epsilon_B \sim \mathcal{N}(  \mathbf{0}, \mI)$,
            $t_{1},...,t_B \sim \gU\{1, \dots, T\}$
            \State Generate the data by student: $\rvy_i = G_\theta(\epsilon_i)$
        \If{iteration$\% \tau = 0$}  \Comment{Update student}
            \State Compute and clip the density ratio: $r_i = \texttt{clip}(\exp(D_\lambda(\rvy_i)),r_{\textrm{min}},r_{\textrm{max}}) $
               \State Normalize the weighting coefficient: $\bar{r}_i = r_i / \sum_i r_i$ \Comment{First-stage normalization}
            \State Compute the weighting coefficient: $h_i = h(\bar{r}_i)$
            \State Normalize the weighting coefficient: $\bar{h}_i = h_i / \frac{1}{B}\sum_i h_i$ \Comment{Second-stage normalization}
            \State Compute the empirical $f$-distill loss $\tilde{\gL}_{\textrm{\method}}(\theta)$ based on \Eqref{eq:obj-final} with $(\rvy_i, t_i, \bar{h}_i)_{i=1}^B$ and fake score $s_\psi$
            \State Compute the empirical GAN loss $\tilde{\gL}_{\textrm{GAN}}(\theta)$ for generator with $(\rvy_i, t_i)_{i=1}^B$ 
            \State Update the student parameter $\theta$ by $\tilde{\gL}_{\textrm{\method}}(\theta)+w_{\textrm{GAN}}\tilde{\gL}_{\textrm{GAN}}(\theta)$.
        \Else \Comment{Update fake score and discriminator}
            \State Update the fake score with denoising score-matching loss using $(\rvy_i, t_i)_{i=1}^B$.
            \State Update the discriminator with empirical GAN loss $\tilde{\gL}_{\textrm{GAN}}(\lambda)$ for discriminator with $(\rvx_i, \rvy_i, t_i)_{i=1}^B$ 
        \EndIf
        
        \EndFor
    \end{algorithmic}
    \end{algorithm}

\section{Properties of $f$-divergence}

\label{app:property}

\subsection{Mode-seeking behavior in $f$-divergence}

\subsubsection{Classification by mode-seeking}
Mode-seeking, as described in Section 10.1.2 in \cite{Bishop2006PatternRA}, refers to the tendency of fitted generative models to capture only a subset of the dominant modes in the data distribution. This occurs during the minimization of the $f$-divergence $\min_q D_f(p||q)$ between the true data distribution ($p$) and the learned generative distribution ($q$).  An $f$-divergence is considered ``mode-seeking" if its minimization leads to this mode-seeking behavior in the corresponding generative model. The mode-seeking behavior in generative models translates into a lack of diversity in practice. Most of the previous classifications of mode-seeking divergences are mainly based on empirical observations. For example, reverse-KL is widely considered mode-seeking, and forward-KL aims for the opposite~(\ie mode-coverage)~\cite{Poole2016ImprovedGO}. Here, we applied the criteria proposed in \cite{pmlr-v206-ting-li23a}~(see Definition 4.1 in the paper) to roughly classify the $f$-divergence based on mode-seeking. Intuitively, a smaller limit indicates a higher tolerance of the corresponding $f$-divergence for large density ratios ($r=p/q$). This allows the generative distribution $q$ to assign less probability mass to regions where the true distribution $p$ has high density without incurring a significant penalty. Consequently, this behavior can lead to mode-seeking, where the model focuses on capturing only the dominant modes of the data distribution. Hence, we use the rate of the limit to classify divergence in the mode-seeking column in Table~\ref{tab:f-div}.

\subsubsection{Relation to the weighting function $h$}
Another paper~\cite{Shannon2020NonsaturatingGT} classifies the mode-seeking divergence based on the increasing rate of the limits $ \lim_{r\to \infty}f''(r)$ and $ \lim_{r\to 0}f''(r)$, through a concept of ``tail weight". The tail weight associated with $ \lim_{r\to \infty}f''(r)$~(right tail weight) / $ \lim_{r\to 0}f''(r)$~(left tail weight) describes how strongly the mode-seeking / mode-coverage behavior is penalized. A larger rate of limit can be translated into a higher penalty on mode-seeking. Table~\ref{tab:tail-weight-rate} demonstrates tail weights for different divergences. In general, less mode-seeking divergence will have a larger right tail weight~(rate of $\lim_{r\to \infty}f''(r)$) and smaller left tail weight~(rate of $\lim_{r\to 0}f''(r)$). As a result, when using these canonical $f$-divergences, the weighting function $h(r)$ would be an increasing function if the divergence is less mode-seeking since $h(r)= f''(r)r^2$. For example, $h$ in JS and forward-KL is an increasing function, while in reverse-KL, $h$ stays constant. An increasing h tends to downweight regions with lower density in the true data distribution $p$. This corresponds to regions where the teacher score is less reliable. Fig.~\ref{fig:teasdr-2} illustrates the idea.

\begin{table}[h]
    \centering
    \begin{tabular}{l c c c c c c}
    \toprule
     & reverse-KL & softened RKL & JS & squared Hellinger & forward-KL & Jefferys  \\
    \midrule
    Rate of $ \lim_{r\to \infty}f''(r)$ & $\gO(r^{-2})$& $\gO(r^{-3})$ & $\gO(r^{-2})$ & $\gO(r^{-\frac{3}{2}})$ & $\gO(r^{-1})$ & $\gO(r^{-1})$\\
    Rate of $ \lim_{r\to 0}f''(r)$& $\gO(r^{-2})$& $\gO(r^{-2})$ & $\gO(r^{-1})$ & $\gO(r^{-\frac{3}{2}})$ & $\gO(r^{-1})$ & $\gO(r^{-2})$\\
    \bottomrule
    \end{tabular}
    \caption{Right / left weight for different $f$-divergences. We shift the tail weight in \cite{Shannon2020NonsaturatingGT} by a constant for clarity.}
    \label{tab:tail-weight-rate}
\end{table}

\begin{figure}
    \centering
    \includegraphics[width=1\linewidth]{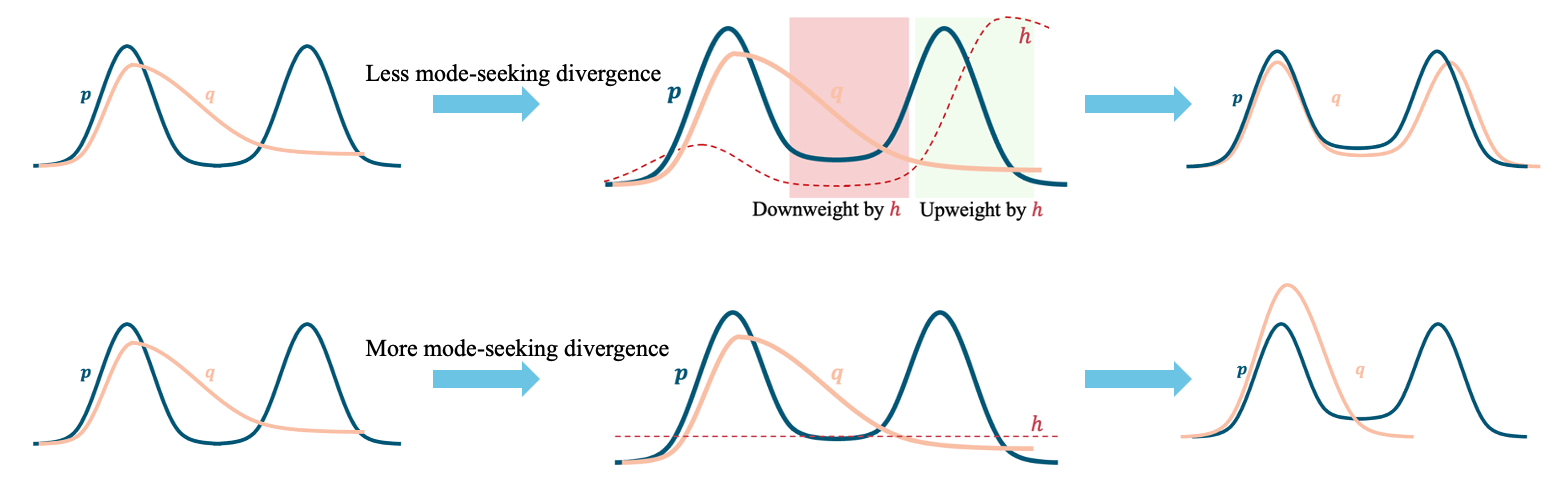}
    \caption{Illustration of how the weighting function $h$~(\textcolor{myred}{red dotted line}) in less mode-seeking divergence~(forward-KL, $h = p/q$) helps to learn the true data distribution $ p$, compared to more mode-seeking divergence~(reverse-KL, $h \equiv 1$). We illustrate how using a less mode-seeking divergence can better capture different modes, from a skewed initial generative distribution $q$, with the help of the weighting function.}
    \label{fig:teasdr-2}
\end{figure}

\subsection{$f$-divergence and Fisher divergence}

A line of work focuses on achieving distributional matching by minimizing the Fisher divergence or its variants~\cite{zhou2024score, luo2024one}. While $f$-divergence-based distillation methods match the probability density functions $p_t$~(teacher distribution) and $q_t$~(student distribution), Fisher divergence-based distillation aims to match the distributions by minimizing the distance between their score functions. This equates to matching the gradients of the log probability density functions, $\nabla_\rvx \log p_t(\rvx)$ and $\nabla_\rvx \log q_t(\rvx)$:
\begin{align*}
    &\textbf{($f$-divergence):} \quad \min_{q_t} \int q_t(\rvx) f\left(\frac{p_t(\rvx)}{q_t(\rvx)}\right) d\rvx \\
    &\textbf{(General Fisher divergence):} \quad \min_{q_t} \int q_t(\rvx) \rvd(\nabla_\rvx \log p_t(\rvx), \nabla_\rvx \log q_t(\rvx)) d\rvx \numberthis \label{eq:fisher}
\end{align*}
where $\rvd$ is a scalar-valued proper distance function satisfying $\rvd(\rvx) \ge 0$ and $\rvd(\rvx)=0$ if and only if $\rvx=0$. When $\rvd$ is squared $\ell_2$ distance, \Eqref{eq:fisher} reduces to Fisher divergence. 

In practice, directly minimizing the Fisher divergence in Equation \ref{eq:fisher} is challenging. Existing works often rely on certain assumptions and approximations to make this optimization tractable. For example, \cite{luo2024one} imposes the stop gradient operation on the sampling distribution $q_t$~(first term in the integral in \Eqref{eq:fisher}). In addition, they typically use a Monte-Carlo sampler~\cite{zhou2024score}, derived from Tweedie’s Formula, to estimate an intractable term in the gradient.

\section{Additional Results}
\label{app:results}
\subsection{CIFAR-10 result with more baselines}

Due to space limits, we do not include baselines in the table for CIFAR-10~(Table~\ref{tab:cifar10}) in the main text. We provide a more comprehensive comparison in Table~\ref{tab:cifar10-more}. Please note that we mainly use this dataset for analyzing the difference of variants under the \methodtext family. Unlike previous works using pre-trained feature extractor~\cite{Kim2023ConsistencyTM, zheng2024diffusion}, we use a simple classification head as a discriminator on top of the teacher's features, and do not tune hyper-parameter on this dataset. Note that our approach simply employs the teacher model as feature extractor unlike GDD-I and CTM, as we primarily use this dataset to analyze the relative performance of different $f$s\looseness=-1.

\begin{table}[htbp]
\footnotesize
    \centering
    \begin{tabular}{l c c c}
    \toprule
    & FID $\downarrow$ & Recall $\uparrow$ & NFE\\
        \textbf{Multi-step diffusion models}\\
        \midrule
           DDPM~\citep{ho2020ddpm} & 3.17 & & 1000  \\    
    LSGM~\citep{vahdat2021score} & 2.10 & & 138 \\
  EDM~\cite{Karras2022ElucidatingTD}~(teacher) & 1.79 & & 35 \\
    PFGM++~\citep{Xu2023PFGMUT} & 1.74 & & 35 \\
    \midrule
    \textbf{GANs}\\
    BigGAN~\cite{Brock2018LargeSG} & 14.73& & 1\\ 
    StyleSAN-XL~\cite{Sauer2022StyleGANXLSS} & 1.36 & & 1\\
    \midrule
    \textbf{Diffusion distillation}\\
    Adversarial distillation & 2.60 & &1\\
    SiD~($\alpha=1$)~\cite{zhou2024score} & 1.93  & &1\\
    SiD~($\alpha=1.2$)~\cite{zhou2024score} & 1.71  & &1\\
    CTM~\cite{Kim2023ConsistencyTM} & 1.73  & &1\\ 
    GDD~\cite{zheng2024diffusion} & 1.66 & &1\\
    GDD-I~\cite{zheng2024diffusion} & 1.44 & &1\\
    \midrule
    \textbf{\method}\\
    reverse-KL~(\textcolor{red}{\cmark}, DMD2 \cite{yin2024improved}) & 2.13 & 0.60 &1\\
    softened RKL~(\textcolor{red}{\cmark}) & 2.21 & 0.60 &1\\
    squared Hellinger~(\textcolor{orange}{--}) & 1.99 & 0.63   &1\\
    JS~(\textcolor{orange}{--}) & 2.00 & 0.62 &1\\
    Jeffreys~(\textcolor{green}{\xmark}) & 2.05 &0.62 &1\\
    forward-KL~(\textcolor{green}{\xmark}) & 1.92 & 0.62 &1\\
    \bottomrule
    \end{tabular}
    \vspace{-5pt}
    \caption{FID and Recall scores on CIFAR-10. \textcolor{red}{\cmark}/\textcolor{orange}{--}/\textcolor{green}{\xmark} stand for high/medium/low mode-seeking tendency for $f$-divergence. }
    \label{tab:cifar10-more}
\end{table}

\subsection{Diversity evaluation based on in-batch similarity}
\label{app:diversity}
\begin{table}[t]
\footnotesize
    \centering
    \begin{tabular}{l c}
    \toprule
    &  In-batch-sim~$\downarrow$ \\
    \midrule
    
    Teacher (NFE=50, CFG=3, ODE)  & 0.55 / 0.70 \\
      Teacher (NFE=50, CFG=8, ODE)  & 0.62 / 0.72\\
      & \\[-1.9ex]
    \cdashline{1-2}
    & \\[-1.9ex]
    \textit{CFG=1.75} \\
    reverse-KL (DMD2)  & 0.50 / 0.42 \\
    JS~(\textit{ours})  & 0.49 / 0.41 \\
                & \\[-1.9ex]
    \cdashline{1-2}
    & \\[-1.9ex]
    \textit{CFG=5} \\
    reverse-KL (DMD2)  & 0.67 / 0.60\\
    JS~(\textit{ours})  & 0.65 / 0.58 \\
    \bottomrule
    \end{tabular}
    \caption{In-batch similarity on MS COCO 2014 / Parti-Prompt. We did not use Recall for text-to-image evaluation because it requires generating numerous samples per prompt. Furthermore, we found the Diversity score in \cite{yin2024improved} to be unreliable; higher CFG values in the teacher model, which are known to reduce diversity, will result in better Diversity scores.}
    \label{tab:sd-diversity}
\end{table}

It is important to understand the diversity of generated samples given a text prompt. We did not use Recall because it is unsuitable for measuring diversity in text-to-image generation, as it requires generating numerous samples per prompt. Furthermore, we found the Diversity score in \cite{yin2024improved} unreliable: higher CFG values in the teacher model, known to reduce diversity, will result in better Diversity scores. Hence, we opt for the in-batch similarity score. As a result, we use the in-batch similarity~\cite{corso2023particle} to measure the diversity. In-batch similarity~\cite{corso2023particle} calculates the average pairwise cosine similarity of features within an image batch, with DINO~\cite{caron2021dino} as the feature extractor. 

Table~\ref{tab:sd-diversity} reports the in-batch similarity score to measure the diversity of text-to-image tasks. \textit{We did not use Recall for text-to-image evaluation because it requires generating numerous samples per prompt. Furthermore, we found the Diversity score in \cite{yin2024improved} to be unreliable; higher CFG values in the teacher model, which are known to reduce diversity, will result in better Diversity scores.} Our main finding is that JS outperforms reverse-KL in in-batch similarity across datasets and CFGs (two numbers are MS-COCO/Parti-Prompt subset~\cite{yin2024improved} evaluations). JS shows a larger diversity gain on higher CFG, suggesting it preserves more modes.

\subsection{Image quality evaluation based on HPSv2}
\label{app:hpsv2}
\begin{table}[t]
\footnotesize
    \centering
    \begin{tabular}{l c c c c}
    \toprule
    & Anime & Photo & Concept Art & Painting\\
    \midrule
    SDv1.5 (CFG=3) & 26.30& 27.56 & 25.86 &  26.08\\
    SDv1.5 (CFG=8) & 27.53& 28.46 & 26.94 &  26.83\\
            & \\[-1.9ex]
    \cdashline{1-5}
    & \\[-1.9ex]
    InstaFlow~\cite{liu2023instaflow} & 25.98 & 26.32 & 25.79 & 25.93\\
    SwiftBrush~\cite{nguyen2024swiftbrush} & 26.91 & 27.21 & 26.32 & 26.37\\
    SwiftBrush v2~\cite{dao2025swiftbrush} & 27.25 & 27.62 & 26.86 & 26.77\\
        & \\[-1.9ex]
    \cdashline{1-5}
    & \\[-1.9ex]
        \textit{CFG=1.75} \\
    reverse-KL (DMD2~\cite{yin2024improved}) & 26.20  & 27.33 & 25.82 & 25.68  \\
    JS~(\textit{ours}) &  26.32\textcolor{green}{$\uparrow$} & 27.71\textcolor{green}{$\uparrow$} & 25.79\textcolor{red}{$\downarrow$} & 25.81\textcolor{green}{$\uparrow$} \\
    & \\[-1.9ex]
    \cdashline{1-5}
    & \\[-1.9ex]
        \textit{CFG=5} \\
    reverse-KL (DMD2~\cite{yin2024improved}) & 26.52  &27.86 &26.25  & 26.10  \\
    JS~(\textit{ours}) & 26.85\textcolor{green}{$\uparrow$}  & 27.98\textcolor{green}{$\uparrow$} & 26.37\textcolor{green}{$\uparrow$} & 26.34\textcolor{green}{$\uparrow$} \\
    \bottomrule
    \end{tabular}
    \caption{HPSv2 score}
    \label{tab:hpsv2}
    \vspace{-10pt}
\end{table}

We evaluate the HPSv2 score~(higher is better) following the protocol in \cite{dao2025swiftbrush} to assess the image quality. In Table~\ref{tab:hpsv2}, we observe that JS consistently outperforms reverse-KL on almost all prompt categories. When CFG=5, JS performs competitively to 50-step teacher SDv.15 (CFG=8) and SwiftBrush v2. 

\subsection{Training stability}

In section~\ref{sec:properties} and section~\ref{app:property}, we show that more mode-coverage divergence tends to have a more rapidly increasing $h$, resulting in a higher-variance gradient. Although we propose a double normalization scheme in section~\ref{sec:properties}, we show that it is insufficient on larger-scale COYO-700M when using the SD v.1.5 model. As shown in Fig.~\ref{fig:training-loss}, the loss of forward-KL has a significantly larger fluctuation than the one in JS. In addition, the forward-KL achieves a much worse FID~(8.70) compared to JS~(7.45). We hypothesize that the inaccurate estimation of the density ratio $r$ by the discriminator on this dataset contributes to this phenomenon. We will leave the stabilization techniques for further work.

\begin{figure*}[htbp]
    \centering
    \includegraphics[width=0.5\linewidth]{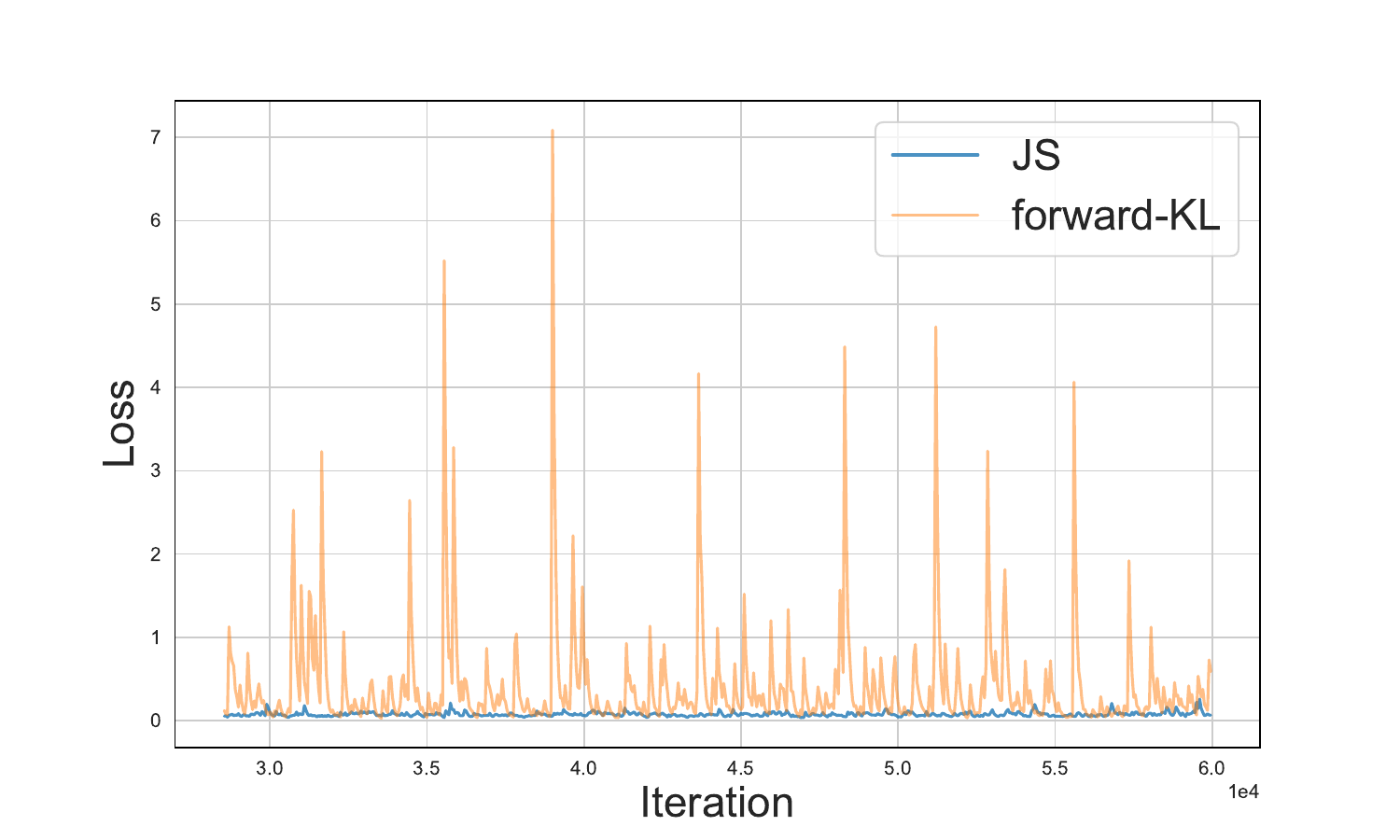}
    \caption{Training dynamics of JS and forward-KL on COYO-700M.}
    \label{fig:training-loss}
\end{figure*}

\subsection{Higher classifier-free guidance}

In this section, we experiment with applying higher classifier-free guidance~(CFG) to \methodtext, by replacing the teacher score in the gradient~(\Eqref{eq:time-0-loss}) with the corresponding CFG version. In Fig.~\ref{fig:vis-cfg-5}, Fig.~\ref{fig:vis-cfg-5-2} and Fig.~\ref{fig:vis-cfg-5-3}, we compare the generated samples by one-step JS with CFG=5, and by SD v1.5~(NFE=50) with CFG=5 / CFG=8. The first three prompts in these figures are randomly chosen from COYO-700M, and the next three prompts are from \cite{yin2024one}. We observe that, in general, the one-step student matches, or even outperforms, the teacher in most cases. In addition, JS produces more diverse samples compared to reverse-KL~(RKL).

We observe that both JS and revesre-KL~(DMD2~\cite{yin2024improved}) diverges when using CFG=8. We first hypothesize that this is because the generated samples and the real data~(COYO-700M) have a larger domain gap, exacerbating the training instability in GAN. DMD~\cite{yin2024one, yin2024improved} uses LAION-Aesthetic 5.5+ as the training set, which is considered more saturated and has a smaller domain gap with data generated by high CFG. However, we find that removing GAN loss does not resolve this issue.The training instability with high CFG might be linked to the weak teacher model~(SD1.5), as our preliminary experiments with clipped teacher score prevented divergence.

\subsection{Ablation study on GAN loss}

\begin{table}[t]
\footnotesize
    \centering
    \begin{tabular}{l c c}
    \toprule
    &  w/o GAN loss & w/ GAN loss \\
    \midrule
   \textit{CIFAR-10}\\
    reverse-KL (DMD2)  & 4.07 & 2.13\\
    JS  & 3.98 & 2.00 \\
    squared Hellinger &3.81 & 1.99 \\
    forward-KL &\bf{3.76} & \bf{1.92}\\
    \midrule
    \textit{MS-COCO-30k}\\
    reverse-KL (DMD2)  & 9.54 & 8.17\\
    JS  & \bf{9.10} & \bf{7.42} \\
    \bottomrule
    \end{tabular}
    \caption{FID score for ablation study on GAN loss}
    \label{tab:gan-loss-ab}
\end{table}

The final training framework in the experimental section contains two losses: the $f$-distill objective~(\Eqref{eq:obj-final}) and GAN loss. In this section, we conduct ablation studies on the GAN loss on CIFAR-10 and MS-COCO 2014 in $f$-distill.As shown in Table~\ref{tab:gan-loss-ab}, the relative ranking of FID scores by different f-divergence remains the same with or without GAN loss across datasets.

\begin{figure*}
\centering
    \begin{subfigure}[b]{0.87\textwidth}
    \includegraphics[width=\textwidth]{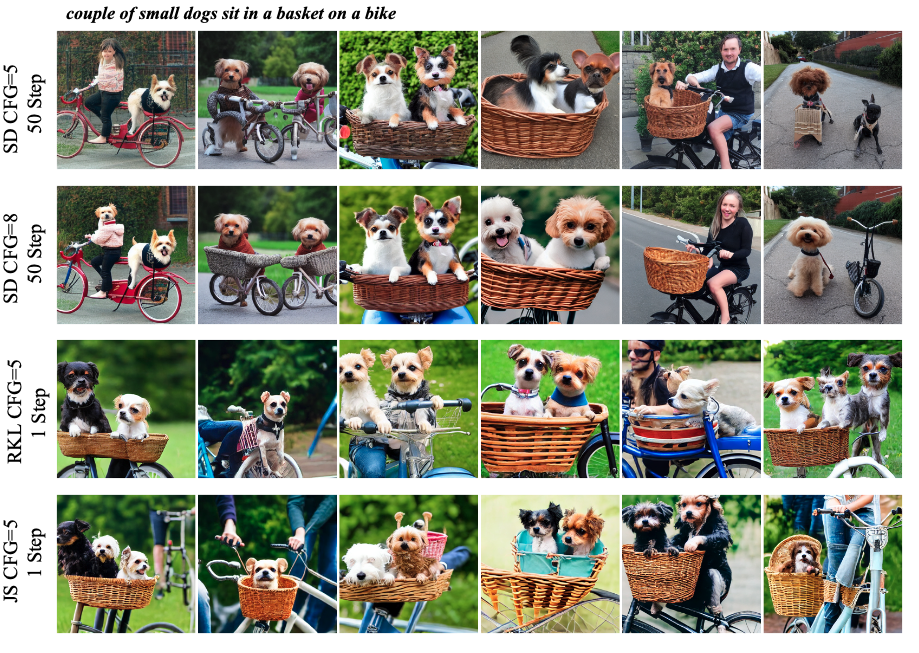}
  \end{subfigure}
  \vfill
  \begin{subfigure}[b]{0.87\textwidth}
    \includegraphics[width=\textwidth]{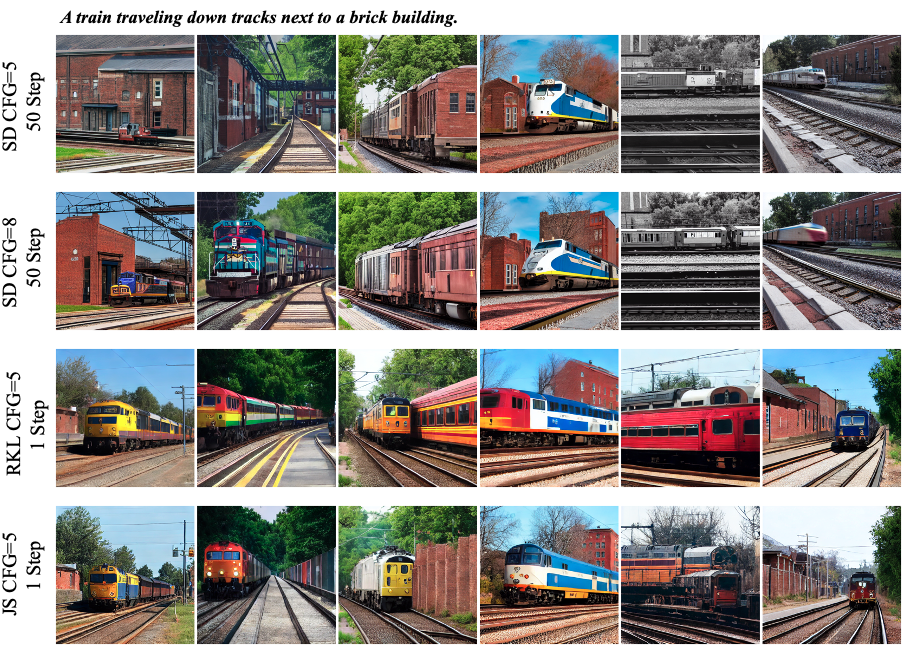}
  \end{subfigure}
  \vspace{-12pt}
    \caption{SD v1.5: Generated samples from multi-step teachers and single-step students, using the same prompts and random seeds. The real data used for GAN objective are from COYO-700M.}
    \label{fig:vis-cfg-5}
\end{figure*}

\begin{figure*}
\centering
    \begin{subfigure}[b]{0.88\textwidth}
\includegraphics[width=\textwidth]{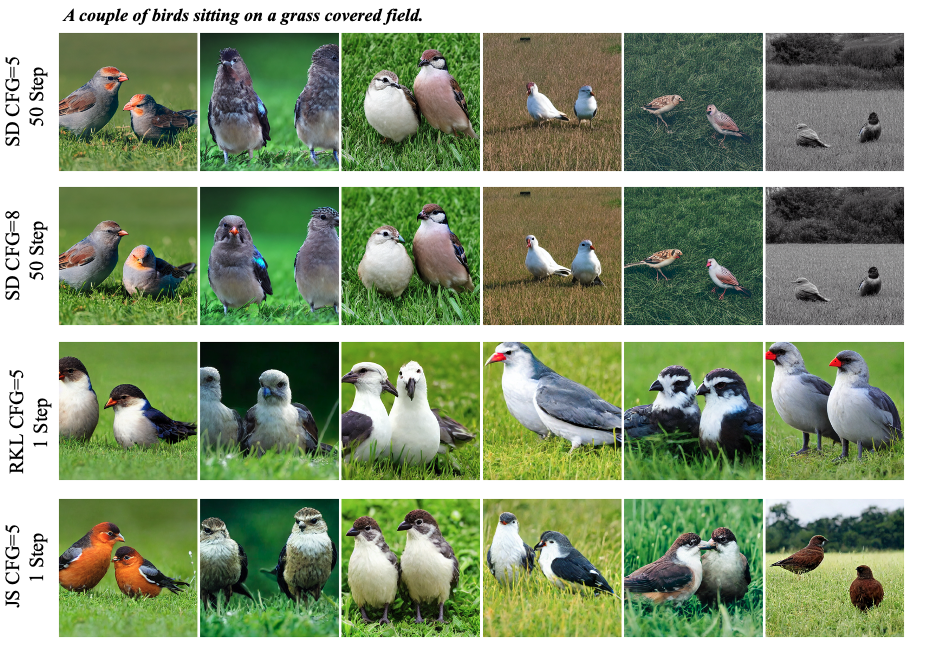}
  \end{subfigure}
\vfill
  \begin{subfigure}[b]{0.87\textwidth}
  \includegraphics[width=\textwidth]{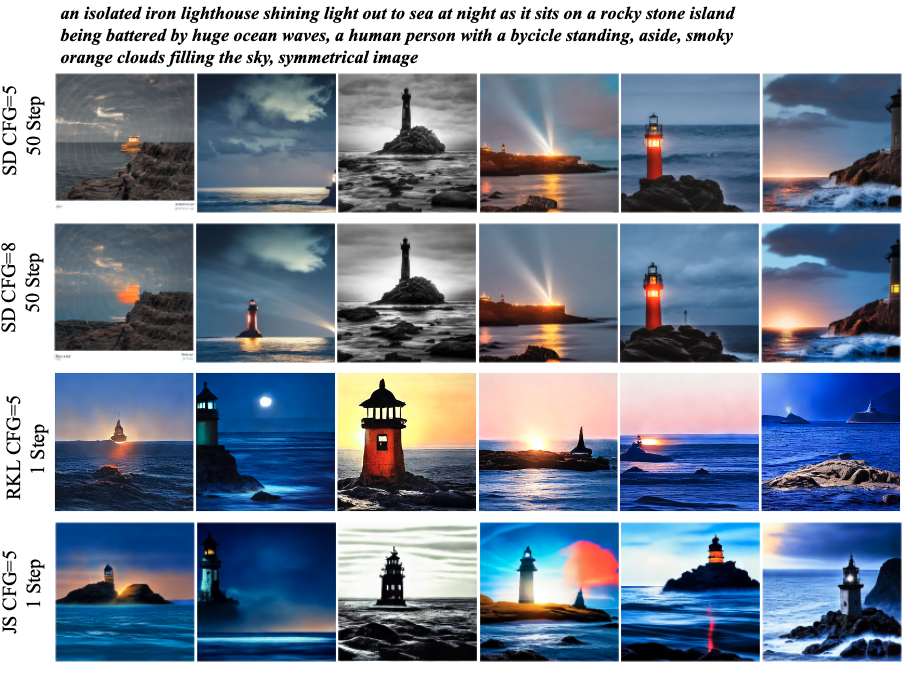}
  \end{subfigure}
  \vspace{-12pt}
    \caption{SD v1.5: Generated samples from multi-step teachers and single-step students, using the same prompts and random seeds. The real data used for GAN objective are from COYO-700M.}
    \label{fig:vis-cfg-5-2}
\end{figure*}

\begin{figure*}
\centering
  \begin{subfigure}[b]{0.875\textwidth}
    \includegraphics[width=\textwidth]{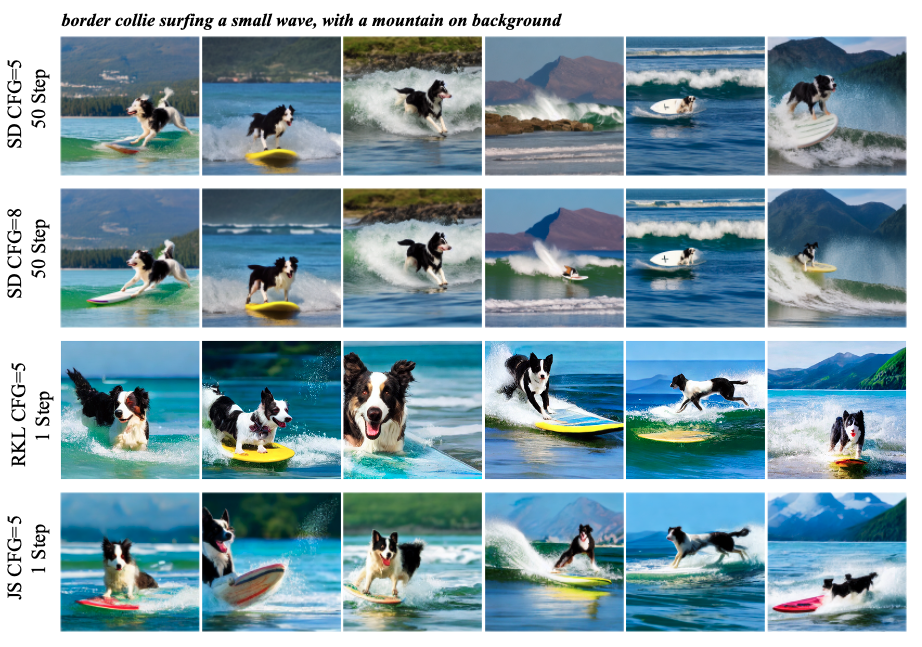}
  \end{subfigure}\vfill
    \begin{subfigure}[b]{0.87\textwidth}
    \includegraphics[width=\textwidth]{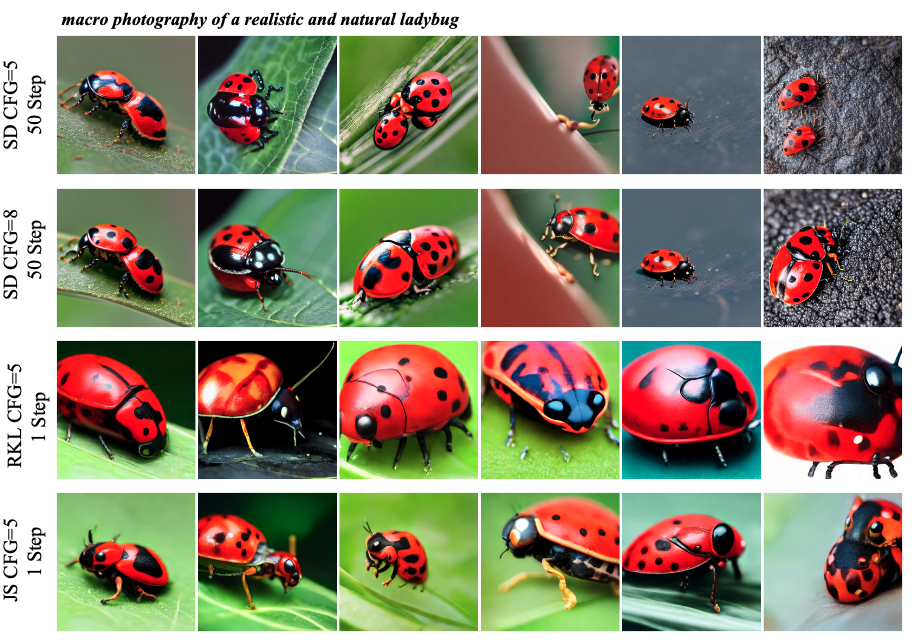}
  \end{subfigure}
  \vspace{-12pt}
    \caption{SD v1.5: Generated samples from multi-step teachers and single-step students, using the same prompts and random seeds. The real data used for GAN objective are from COYO-700M.}
    \label{fig:vis-cfg-5-3}
\end{figure*}

\vspace{-2pt}
\section{Extended Samples}
\vspace{-2pt}
\label{app:extended}

We provide extended samples on CIFAR-10~(Fig.~\ref{fig:cifar-edm}~(multi-step teacher), Fig.~\ref{fig:cifar-js}~(KL, \method)); ImageNet-64~(Fig.~\ref{fig:imagenet-64-edm}~(multi-step teacher), Fig.~\ref{fig:imagenet-64-js}~(JS, \method)); COYO-700M~( Fig.~\ref{fig:sdxl-js-cfg-5}~(SDXL \method), Fig.~\ref{fig:sd-js-cfg-5}~(SDv1.5 \method)). The teacher and student models use the same random seeds and class labels/text prompts. We further provide the randomly sampled COYO-700M prompts for the generated samples in the main text and supplementary material below.
\vspace{-5pt}
\paragraph{8 prompts for generated images in Fig.~\ref{fig:vis-main}:}
\begin{itemize}
    \item There is a long river in the middle of a spectacular valley, and the drone aerial photography shows the mountains and rivers.
    \item A soft beam of light shines down on an armored granite wombat warrior statue holding a broad sword. The statue stands an ornate pedestal in the cella of a temple. wide-angle lens. anime oil painting.
    \item A train ride in the monsoon rain in Kerala. With a Koala bear wearing a hat looking out of the window. There is a lot of coconut trees out of the window.
    \item A Labrador wearing glasses and casual clothes is lying on the bed reading.
    \item A cute little penguin walks on an Antarctic glacier, searching for food.
    \item A teddy bear on a skateboard in times square.
    \item The car is accelerating, the background on both sides is blurred, focus on the body.
    \item A couple of birds sitting on a grass covered field.
\end{itemize}
\vspace{-5pt}
\paragraph{8 prompts for generated images in Fig.~\ref{fig:vis}:}
\begin{itemize}
\item A man putting a pan inside of an oven with his bare hand.
\item A man flying a kite on the beach. 
\item A hotel room filled with beige and blue furniture.
\item A large stone bench sitting next to rose bushes.
\item Clock tower over a crowd of people standing on a bridge.
\item a blue and white passenger train coming to a stop
\item The clock shown above has someone's name on it.
\item A large predatory bird sits on a tree branch in an exhibit.
\end{itemize}

\vspace{-5pt}
\paragraph{24 prompts for generated images in Fig.~\ref{fig:sd-js-cfg-5}:}
\begin{itemize}
\item A large green gate sitting in front of a red brick building.
\item A room in a private house for loosening up and institutionalizing. 
\item A train traveling down tracks next to a brick building.
\item A couple of birds sitting on a grass covered field.
\item A cat is laying on the other side of a cactus.
\item A black horse standing in a desert field surrounded by mountain.
\item Many difference birds in cages on display in an outdoor market area.
\item A group of people watching a man skateboard.
\item A man riding on top of a surfboard in the ocean.
\item a table with some plates of food on it
\item An orange billboard truck driving down a street in front of a crowd of people.
\item A vase filled with lots of different colored flowers.
\item A large living room is seen in this image.
\item A black and white cat that is standing on all fours and has an elephant hat on it's head.
\item Silhouette of a herd of elephants walking across the field
\item Separate men sitting on park benches playing on phone and reading.
\item A wooden table topped with plates and bows filled with food
\item A cat curled up in a sunny spot on a table sleeping.
\item A woman going down the stairs with a backpack on and a suitcase in her hand. 
\item A man riding a red surfboard on a wave in the ocean.
\item a couple of small dogs sit in a basket on a bike
\item A bike parked in front of a parking meter.
\item Two men riding mopeds, one with a woman and boy riding along.
\item A BOY IS ON A SKATE BOARD IN THE COURT
\end{itemize}

\begin{figure*}
    \centering
\includegraphics[width=0.85\linewidth]{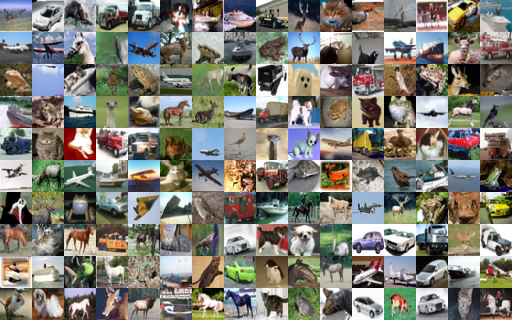}
    \caption{35-step generated CIFAR-10 samples, by EDM~\cite{Karras2022ElucidatingTD}~(\textcolor{red}{teacher}). FID score: 1.79 }
    \label{fig:cifar-edm}
\end{figure*}

\begin{figure*}
    \centering
\includegraphics[width=0.85\linewidth]{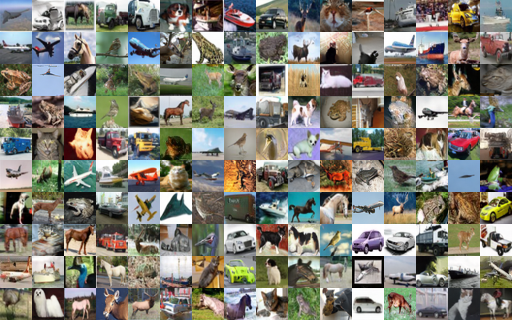}
    \caption{One-step generated CIFAR-10 samples, by KL in \textcolor{blue}{\method}. FID score: 1.92 }
    \label{fig:cifar-js}
\end{figure*}

\begin{figure*}
    \centering
\includegraphics[width=0.8\linewidth]{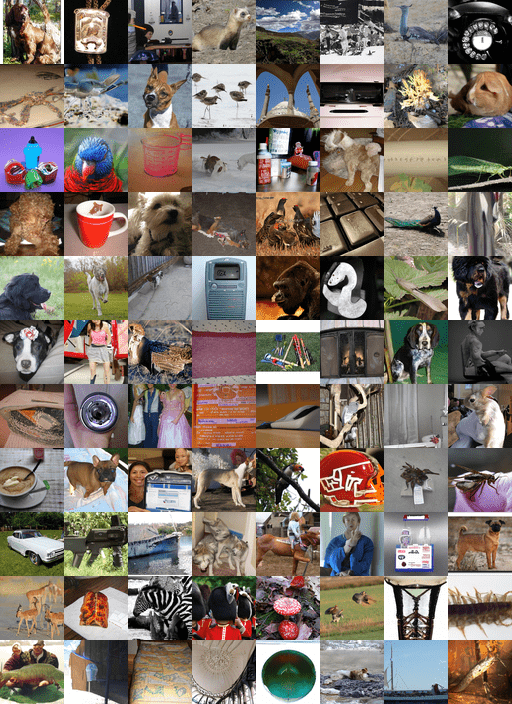}
    \caption{79-step generated ImageNet-64 samples, by EDM~\cite{Karras2022ElucidatingTD}~(\textcolor{red}{teacher}). FID score: 2.35 }
    \label{fig:imagenet-64-edm}
\end{figure*}

\begin{figure*}
    \centering
\includegraphics[width=0.8\linewidth]{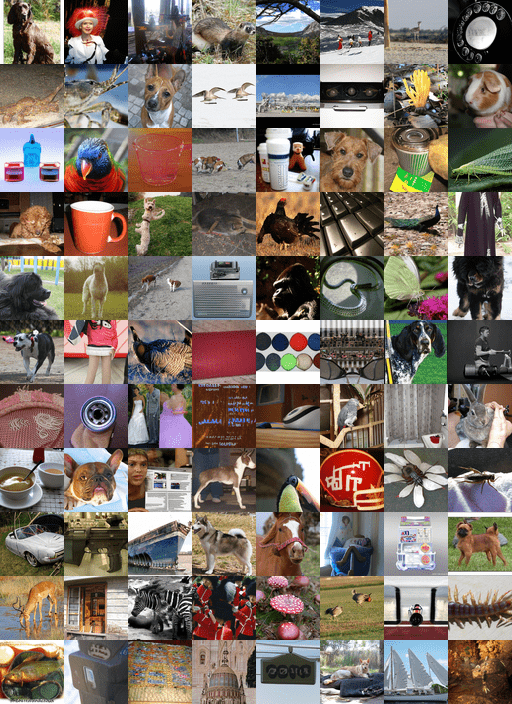}
    \caption{One-step generated ImageNet-64 samples, by JS in \textcolor{blue}{\method}. FID score: 1.16 }
    \label{fig:imagenet-64-js}
\end{figure*}



\begin{figure*}
    \centering
\includegraphics[width=0.8\linewidth]{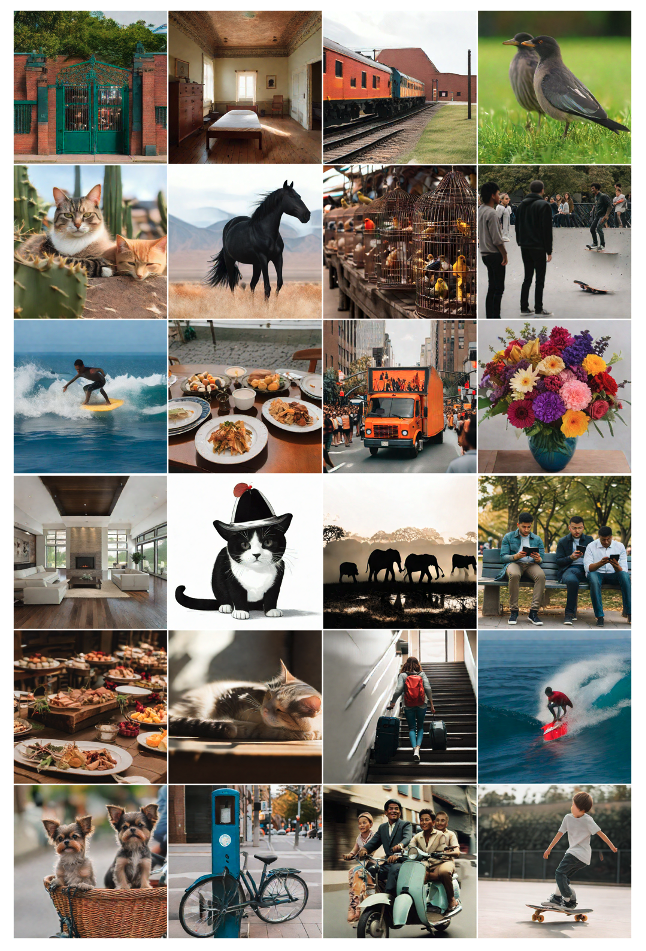}
    \caption{One-step generated SDXL samples, using randomly sampled COYO-700M prompts, by JS in \textcolor{blue}{\method}, with CFG=8.}
    \label{fig:sdxl-js-cfg-5}
\end{figure*}

\begin{figure*}
    \centering
\includegraphics[width=0.8\linewidth]{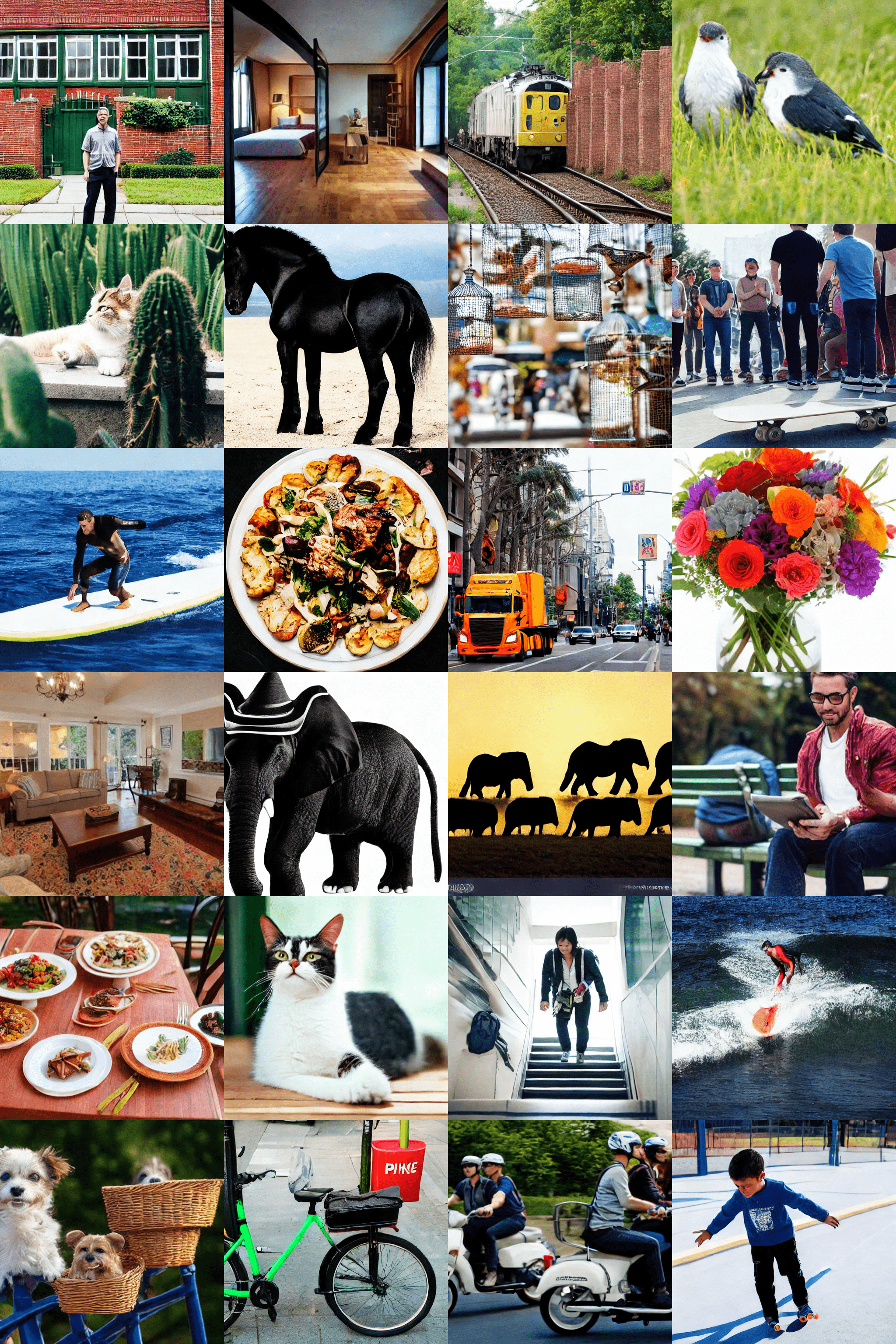}
    \caption{One-step generated SD v1.5 samples, using randomly sampled COYO-700M prompts, by JS in \textcolor{blue}{\method}, with CFG=5.}
    \label{fig:sd-js-cfg-5}
\end{figure*}

\end{document}